\theoremstyle{plain}
\newtheorem{theorem}{Theorem}[section]
\newtheorem{lemma}[theorem]{Lemma}
\theoremstyle{definition}
\newtheorem{definition}[theorem]{Definition}
\theoremstyle{remark}
\newtheorem{remark}[theorem]{Remark}
\newcommand{\tr}[1]{\mathsf{Tr}\left(#1\right)}
\newcommand{\norm}[1]{\|#1\|}
\newcommand{\R}{\mathbb{R}}
  \newcommand{\eps}{\varepsilon}
\newcommand{\youngsuk}[1]{} 
\newcommand{\hilaf}[1]{}
\newcommand{\bernie}[1]{}
\newcommand{\arun}[1]{}
  \theoremstyle{plain}
  \theoremstyle{definition}
  \theoremstyle{remark}
  \numberwithin{equation}{section}
  \numberwithin{thm}{section}
\icmltitlerunning{Testing Causality for High Dimensional Data}
\begin{document}

\twocolumn[
\icmltitle{Testing Causality for High Dimensional Data}



\icmlsetsymbol{equal}{*}

\begin{icmlauthorlist}
\icmlauthor{Arun Jambulapati}{equal,yyy}
\icmlauthor{Hilaf Hasson}{comp}
\icmlauthor{Youngsuk Park}{comp}
\icmlauthor{Yuyang Wang}{comp}
\end{icmlauthorlist}

\icmlaffiliation{yyy}{Department of Computer Science, University of Washington, Seattle}
\icmlaffiliation{comp}{AWS AI Labs, Santa Clara}

\icmlcorrespondingauthor{Yuyang Wang}{yuyawang@amazon.com}

\icmlkeywords{Machine Learning, ICML}

\vskip 0.3in
]



\printAffiliationsAndNotice{$^*$Work done during internship at AWS AI Labs.} 
\begin{abstract}
    Determining causal relationship between high dimensional observations are among the most important tasks in scientific discoveries. In this paper, we revisited the \emph{linear trace method}, a technique proposed in~\citep{janzing2009telling,zscheischler2011testing} to infer the causal direction between two random variables of high dimensions.  We strengthen the existing results significantly by providing an improved tail analysis in addition to extending the results to nonlinear trace functionals with sharper confidence bounds under certain distributional assumptions. We obtain our results by interpreting the trace estimator in the causal regime as a function over random orthogonal matrices, where the concentration of Lipschitz functions over such space could be applied. We additionally propose a novel ridge-regularized variant of the estimator in \cite{zscheischler2011testing}, and give provable bounds relating the ridge-estimated terms to their ground-truth counterparts. We support our theoretical results with encouraging experiments on synthetic datasets, more prominently, under high-dimension low sample size regime. 
\end{abstract}

\section{Introduction}

Over the last decades, Machine Learning (ML) has found tremendous successes in application areas such as image recognition, natural language processing and computer vision. However, at its core, the strength of the mainstream ML models such as deep neural networks and tree-based approaches predominantly lies with \emph{making predictions}, whereas the fundamental component of human intelligence is about \emph{causal reasoning.} As such, considered where the hard problems reside~\citep{scholkopf2019causality}, the topic of causality~\citep{pearl2009causality} has attracted significant interests in the ML community in recent years in the search of algorithms that could ``climb the ladder of causality.'' This paper falls into such a quest, with a goal to quantify the causal relationship between two high-dimensional variables. 


Concretely, the problem of concern is to infer whether the linear relations between two high-dimensional variables $X$ taking values in $\R^n$ and $Y$ taking values in $\R^m$ are due to a causal influence from $X$ to $Y$ or from $Y$ to $X$. We adopt the intuition from~\cite{janzing2009telling} that uses the concept of \emph{asymmetries} of the joint distribution and assumes that the causal factorization 
\[
P(\text{effect}, \text{cause}) = P(\text{cause})P(\text{effect}|\text{cause})
\]
admits ``simpler'' terms than the non-causal ones. To be more precise, consider the problem of which of the following two models is more plausible as a causal model, 
\[
Y = AX + E\quad \text{and}\quad X = \tilde{A}Y + \tilde{E},
\]
where $E$ and $\tilde{E}$ are independent noise terms. The intuition is that given that $X$ is the \emph{cause}, then $\Sigma_{XX}$, the covariance matrix of $X$, is in some sense ``independent'' from the coefficient matrix $A$. To this aim, \cite{janzing2009telling} show that it boils down to quantifying how much the following relation is violated, 
\begin{equation}
\label{eqn:tr}
    \tau_m(A \Sigma_{XX} A^\top) \approx \tau_m(A A^\top)\tau_n(\Sigma_{XX}),
\end{equation}
$X \in \R^n$, $E, Y \in \R^m$, $A \in \R^{m \times n}$, and  $\tau_n: A \mapsto \text{tr}(A) / n$ is the normalized trace. Intuitively, if $A$ and $\Sigma_{XX}$ are chosen independently, (\ref{eqn:tr}) should roughly hold. The intuition here is that general positive semi-definite matrices with the same spectrum as $\Sigma_{XX}$ can be given a distribution as the pushforward of the Haar measure on the orthogonal group $O(n)$, 
acting through conjugation: $U\mapsto U \Sigma_{XX}U^{\top}$. Under this distribution $\tau_m(AU \Sigma_{XX}U^\top A^\top)$ is centered around $\tau_m(AA^{\top})\tau_n(\Sigma_{XX})$, equivalently $\tau_m(AA^{\top})\tau_n(U \Sigma_{XX}U^\top)$, as we shall see in the proof of Theorem \ref{thm:ours}; and the further away it is from the center, the more ``unlikely'' the pair $A$ and $\Sigma_{XX}$.
This leads to the development of the \emph{Trace Method} proposed in~\cite{janzing2009telling}. Define the Delta estimator as
\begin{equation}
\label{eqn:deltaEstimator}
\Delta_{X\rightarrow Y} := \log\frac{\tau_m( A\Sigma_{XX} A^\top)}{\tau_m(A A^\top)\tau_n(\Sigma_{XX})},
\end{equation}
the variable $X$ is determined as the ``cause'' if
\[
\Delta_{X\rightarrow Y} \geq \Delta_{Y\rightarrow X} + \xi
\]
for some tolerance $\xi\in\R^+.$ The trace method is an example of information-geometric causal inference: we refer the reader to Section 4.1 of~\cite{peters2017elements} for more detail. 
Acute readers might notice that in \cite{janzing2009telling}, the $\Delta$ estimator is defined with an absolute value, and the sign of the inequality is flipped. We note that in the noiseless regime and in the non-causal setting the $\Delta$ estimator as defined here is negative: thus the formula given here matches their notation. Not applying the absolute value here allows us to generalize this $\Delta$ parameter to the noisy regime.

The key theoretical contribution from~\cite{janzing2009telling} is a concentration result that shows that, with $U$ being Haar-distributed (``uniformly distributed'') on the orthogonal group $O(n),$ the deviation
\[
\tau_m({A U C U^\top A^\top}) - \tau_n(C)\tau_m(A A^\top)
\]
is small with high probability (w.h.p.). However, we note that the original proof in~\cite{janzing2009telling} was incorrect (see Remark 1), and as our first main contribution (\Cref{thm:ours}), we repair the proof and strengthen the result by a factor of $O(1/\sqrt{m}).$ We obtain this result by interpreting the trace estimator in the causal regime as a function over random orthogonal matrices and by applying known results on concentration of Lipschitz functions over this space.  

Extending our proof technique, we additionally provide high-probability bounds for a family of $\Delta$ estimators based on Schatten norms (\Cref{thm:ours2}). More specifically, we define the estimator 
\[
\Delta^{(p)}_{X \rightarrow Y} :=   \log \frac{\tau_m \left( \left( A  \Sigma_{XX} A^\top \right)^p \right) }{\mathbb{E}_{U \sim O(n)} \tau_m  \left( \left( A  U \Sigma_{XX} U^\top A^\top \right)^p \right)} 
\]
and prove that $\Delta^{(p)}_{X \rightarrow Y} \approx 0$ with high probability under the same generative assumptions as \cite{janzing2009telling}. This generalization to $p^{th}$ moments is inspired by recent work \citep{Jambulapati0T20} employing such potentials as robust proxies for the maximum eigenvalue: we believe that the resulting causality estimators may find application in settings where the eigenvalues of the covariance matrices are significantly non-uniform.

In practice, since the true covariance matrices $\Sigma_{XX}, \Sigma_{YY}$ and the structure matrices $A, \tilde{A}$ are unknown, one needs to replace them with their empirical estimates. However, a major drawback that hinders the practicality of the Trace Method is the unfavorable sample complexity that stems from the estimation of the aforementioned quantities. To alleviate this issue, in a follow up work, \cite{zscheischler2011testing} proposed a finite sample version of the Delta estimator, which replaces the true covariances with rank-corrected empirical estimates. More precisely, given a set of observations $(X_1, Y_1), (X_2, Y_2), \dots$ we construct the \emph{empirical delta estimator},
\begin{equation}
\label{eqn:naive_delta_est}
\hat{\Delta}_{X\rightarrow Y} := \log\frac{\tau_m(\widehat{A}C_{XX}\widehat{A}^\top)}{n/r\cdot\tau_m(\widehat{A}\widehat{A}^\top)\tau_n(C_{XX})},
\end{equation}

where $C_{XX}, C_{XY}$ are the empirical estimator of $\Sigma_{XX}, \Sigma_{XY}$, $r$ is the rank of $C_{XX}$ and $\hat{A} = C_{YX} C_{XX}^\dagger$: here $\dagger$ denotes pseudo-inverse. Using \emph{free probability} (non-commutative probability), the authors show that $\hat{\Delta}_{X\rightarrow Y} \rightarrow 0$ as the number of samples tends to infinity, if $X$ causes $Y$ and the covariance matrix of $X$ is generated by a rotationally invariant ensemble.\footnote{Here, a rotationally invariant ensemble is a distribution which is unaffected by the application of any fixed rotation matrix $U$.}   This yields a test statistic that works without good estimates of $A$ and $C_{XX}.$ However, a closed-form of the distribution of $\hat{\Delta}_{X_n\rightarrow Y_n}$ under the null hypothesis that the model is causal is not attainable, and the authors resort to a heuristic algorithm to assess the significance level. Additionally, the practical performance of the finite-sample estimator is unsatisfactory, especially when the number of samples is smaller than the ambient dimension. This raises a question: is there a better test with provable guarantees? Our second main contribution answers this question positively. Motivated by ridge regression, we propose a novel variant of the estimator in \cite{zscheischler2011testing}, leveraging the regularized estimate of the structural matrix $\hat{A}_\lambda$. We prove asymptotic bounds on the numerator and denominator of the proposed estimator,
\[
\widetilde{\Delta}_{X \rightarrow Y}^{\lambda} :=  \log \frac{\tau_m ( \hat{A}_{\lambda} C_{XX} \hat{A}_{\lambda}^\top )} {\tau_m(\hat{A}_{\lambda} \hat{A}_{\lambda}^\top) \tau_n (C_{XX}) } 
\]
in the presence of noise, a first result of this kind. Our techniques unfortunately fall just short of obtaining complete bias bounds on the full estimator and also cannot give finite-sample guarantees: we leave this as an intriguing direction for future work. 


The rest of the paper is organized as follows. In \Cref{sec:tm} we recap the trace method with its main theoretical argument. We provide improved concentration results with a complete proof, and further extend the results to allow for refined tail analysis of this type of problem. \Cref{sec:rr} is dedicated to the ridge-regularized trace estimator and its theoretical properties. We conclude in \Cref{sec:experiments} with experiments which justify our findings on synthetic datasets. 


\section{Trace Method for Causality}
\label{sec:tm}
\subsection{Improved Concentration Results}
In this section, we prove \Cref{thm:ours} and \Cref{thm:ours2}, our main concentration bounds for a general family of Delta estimators extending the work of \cite{janzing2009telling, zscheischler2011testing}. 
To this end, we first formally specify our distributional assumptions:
\begin{definition}[Linear causal model]
\label{def:model}
Let $\Lambda_{XX}$ be a nonnegative diagonal matrix, and let $U$ be an orthogonal matrix sampled from the Haar measure on $O(n)$. Let $A$ be a (potentially random) matrix generated independently from $U$. We define a Linear causal model with cause $X$ as a process which, after fixing $U$ and $A$ as sampled above, samples $X \sim N(\mu_x, \Sigma_{XX} = U \Lambda_{XX} U^\top)$ forms $Y = AX + E$, where $E$ is a noise variable independent from $X$. We say a set of samples $(X_i, Y_i)$ is from a Linear causal model with causal variable $X$ if each $(X_i, Y_i)$ pair is independently sampled as the above. 
\end{definition}

Under these assumptions, \cite{janzing2009telling} observes that the following two quantities should be close if $X$ ``causes'' $Y,$ (recall that $C_{XX}$ is the emprical estimate of the covairance matrix of $X$)
\[
\tau_m(AC_{XX}A^\top ), \quad \text{and}, \quad \tau_n(C_{XX})\tau_m(A A^\top ),
\]
where we define the trace functional
$
\tau_n(M) = \frac{1}{n} \tr{M}
$
for $M \in \R^{n \times n}$. 
The motivating theorem behind the authors' \emph{Trace Method} is the following concentration bound on the quantities of interest in the causal direction. 
\begin{theorem}[Theorem 1, \cite{janzing2009telling}]
\label{thm:janzing}
Let $C$ be a symmetric, positive definite $n \times n$-matrix and $A$ an arbitrary $m \times n$-matrix. Let $U$ be drawn from the Haar measure over $O(n)$. \
For any $\eps \geq 0$, we have
\[
\begin{split}
    &\left|\tau_m({A U C U^\top A^\top}) - \tau_n(C)\tau_m(A A^\top) \right| \leq 2\eps \norm{C} \norm{A A^\top}
\end{split}
\]
with probability at least $1 - \exp \left( -\kappa_1 (n-1) \eps^2 \right)$ for some constant $\kappa_1$. Equivalently, with confidence $1-\delta$, 
we have
\[
\begin{split}
    &\left|\tau_m({A U C U^\top A^\top}) - \tau_n(C)\tau_m(A A^\top) \right| \\
    &\qquad\qquad \leq \left[2\sqrt{\frac{\log(2/\delta)}{\kappa_1(n-1)}}\right] \norm{C} \norm{A A^\top}.
\end{split}
\]
\end{theorem}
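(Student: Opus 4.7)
The plan is to view $f(U) := \tau_m(AUCU^\top A^\top)$ as a real-valued function on the orthogonal group $O(n)$ equipped with its Haar probability measure, and to deduce the bound from a Gromov--Milman concentration inequality for Lipschitz functions on $O(n)$.

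First I would compute the Haar mean. By left- and right-invariance of the Haar measure, the matrix $\EE_U[UCU^\top]$ commutes with every $V \in O(n)$ and hence is a scalar multiple of the identity; matching traces identifies the scalar as $\tau_n(C)$, so $\EE_U[UCU^\top] = \tau_n(C)\, I_n$ and $\EE_U[f(U)] = \tau_n(C)\,\tau_m(AA^\top)$. Thus the deviation appearing in the theorem is exactly $|f(U) - \EE_U f(U)|$, and it suffices to concentrate $f$ around its mean.

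Next, I would bound the Lipschitz constant of $f$ on $O(n)$. Rewriting $f(U) = \tfrac{1}{m}\,\tr{BUCU^\top}$ with $B := A^\top A$, differentiation in the Frobenius inner product gives the gradient $\nabla_U f = \tfrac{2}{m}BUC$, and an operator-norm/Frobenius-norm H\"older step yields
\[
\|\nabla_U f\|_F^2 = \tfrac{4}{m^2}\,\tr{B^2\, UC^2 U^\top} \le \tfrac{4}{m^2}\,\|B\|^2\,\tr{C^2} \le \tfrac{4n}{m^2}\,\|AA^\top\|^2\,\|C\|^2,
\]
so $f$ is Lipschitz on $O(n)$ (with respect to both the Frobenius and the geodesic distance) with constant $L \le \tfrac{2\sqrt{n}}{m}\,\|AA^\top\|\,\|C\|$. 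I would then apply the Gromov--Milman inequality: any $L$-Lipschitz $g:O(n)\to\RR$ satisfies $\Pr(|g - \EE g| \ge t) \le 2\exp(-c(n-1)t^2/L^2)$ for a universal $c > 0$. Plugging in the Lipschitz bound and $t := 2\eps\,\|C\|\,\|AA^\top\|$ makes $t^2/L^2$ a universal constant times $\eps^2 m^2/n$, and the target tail $\exp(-\kappa_1(n-1)\eps^2)$ drops out after absorbing universal constants into $\kappa_1$; the confidence-parameter form then follows by solving $\delta = \exp(-\kappa_1(n-1)\eps^2)$ for $\eps$.

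The main obstacle is producing a genuinely sharp Lipschitz constant in the middle step: a cruder inequality such as $\|BUC\|_F \le \|B\|_F\|C\|_F$ would inject an extra $\sqrt{n}$ factor and weaken the resulting exponent, which is essentially the source of the gap in the original argument of \cite{janzing2009telling}. Choosing the correct operator-versus-Frobenius norm split before invoking the $O(n)$ concentration bound is what secures the stated rate, and the same ingredient is what enables the further $O(1/\sqrt{m})$ sharpening in \Cref{thm:ours}.
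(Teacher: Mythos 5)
Your overall strategy (computing the Haar mean via invariance, then concentration of Lipschitz functions on the orthogonal group) is the same as the one the paper uses to prove its stronger \Cref{thm:ours}, but your quantitative step has a genuine gap: the H\"older split is taken in the wrong direction. From $\norm{\nabla_U f}_F^2 \le \tfrac{4}{m^2}\norm{B}^2\tr{C^2} \le \tfrac{4n}{m^2}\norm{AA^\top}^2\norm{C}^2$ you get $L \le \tfrac{2\sqrt{n}}{m}\norm{AA^\top}\norm{C}$, and with $t = 2\eps\norm{C}\norm{AA^\top}$ the exponent $c(n-1)t^2/L^2$ is of order $\eps^2 m^2$, not $\eps^2 (n-1)$. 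Since $A$ is an arbitrary $m\times n$ matrix, $m^2/n$ is \emph{not} a universal constant that can be absorbed into $\kappa_1$: for $m \ll \sqrt{n}$ (say $m=O(1)$, $n\to\infty$) your tail $2\exp(-c\eps^2 m^2)$ is far weaker than the claimed $\exp(-\kappa_1(n-1)\eps^2)$, so the theorem is not established. The irony is that you correctly warn against a crude bound injecting an extra $\sqrt{n}$, but your own split $\norm{B}\sqrt{\tr{C^2}}$ injects exactly that factor through $\tr{C^2}\le n\norm{C}^2$, because $C$ is positive definite of full rank $n$. The trace should instead be absorbed by $B=A^\top A$, whose rank is at most $\min(m,n)$: e.g.\ $\tr{B^2 U C^2 U^\top}\le \norm{C}^2\tr{B^2}\le \min(m,n)\norm{C}^2\norm{AA^\top}^2$, or better the von Neumann bound used in the paper, giving $L \le \tfrac{2}{m}\sqrt{\sum_i\beta_i^2\gamma_i^2}\le \tfrac{2}{\sqrt{m}}\norm{AA^\top}\norm{C}\le 2\norm{AA^\top}\norm{C}$. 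With that $L$ one gets $t^2/L^2 \ge \eps^2$, hence a tail $2\exp(-c\,n\,\eps^2)$, which recovers (indeed strengthens) the stated bound and is precisely the source of the extra $1/\sqrt{m}$ in \Cref{thm:ours}.

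Two secondary points that the paper's proof of \Cref{thm:ours} handles explicitly and your write-up skips. First, to convert a gradient bound into a Lipschitz constant with respect to $\norm{U-V}_F$ you integrate $\nabla f$ along the segment $Z_t = tU+(1-t)V$, which leaves the manifold; the gradient must be controlled at such non-orthogonal $Z_t$ (the paper's Lemma~\ref{lem:conv_orthogonal}; for your bound $\norm{Z_t}\le 1$ would suffice, but it needs to be said). Second, the Gromov--Milman/Vershynin concentration result is for the connected group $SO(n)$; $O(n)$ has two components, and a Lipschitz function need not concentrate about its global Haar mean without an additional argument. Here one must observe that the two components yield the same distribution (or at least the same mean) of $f$, e.g.\ via a reflection chosen to commute with $C$ as at the end of the paper's proof; this is easy but not automatic from the concentration statement you invoke.
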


Our first main result is a strengthening of the above theorem, in the following sense.
\begin{theorem}
\label{thm:ours}
Under the same assumptions as in Theorem~\ref{thm:janzing}, denote $\beta_i, \gamma_i$ as the eigenvalues of matrices $A^\top A$ and $C$ respectively, we have 
\[
\left| \tau_m({A U C U^\top A^\top}) - \tau_n(C)\tau_m(A A^\top) \right|
\leq \frac{\eps}{m} \sqrt{\sum_{i=1}^n \beta_i^2\gamma_i^2 } 
\]
with probability at least $1 - 2\exp(-\kappa_2  n \eps^2)$ for some universal constant $\kappa_2$. Equivalently, with confidence $1-\delta$, we have
\[
\begin{split}
&\left| \tau_m({A U C U^\top A^\top}) - \tau_n(C)\tau_m(A A^\top) \right|\\
&\qquad\qquad\leq \left[\sqrt{\frac{2\log(2/\delta)}{\kappa_2n}}\right]\frac{1}{m} \sqrt{\sum_{i=1}^n \beta_i^2\gamma_i^2 }.
\end{split}
\]
\end{theorem}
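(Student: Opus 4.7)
The plan is to view $f : O(n) \to \RR$ defined by $f(U) := \tau_m(A U C U^\top A^\top)$ as a Lipschitz function on the compact group $O(n)$ and to invoke the standard Gromov--Milman / log-Sobolev concentration for Haar measure on $O(n)$: if $f$ is $L$-Lipschitz with respect to the Hilbert--Schmidt (Frobenius) metric, then for $U$ Haar-distributed
\[
\Pr\bigl(|f(U) - \EE f(U)| \ge t\bigr) \;\le\; 2 \exp\!\left(-\frac{c\, n\, t^2}{L^2}\right)
\]
for a universal constant $c>0$. The theorem then splits into two subproblems: identifying the mean $\EE f(U)$ as the centering $\tau_n(C)\tau_m(A A^\top)$, and extracting a Lipschitz constant of the correct spectral form $\tfrac{1}{m}\sqrt{\sum_i \beta_i^2 \gamma_i^2}$.

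For the mean, I would use the standard Haar identity $\EE_U[U C U^\top] = \tau_n(C)\,I_n$. This follows because, by left-right translation invariance of the Haar measure, the matrix $M := \EE[UCU^\top]$ satisfies $V M V^\top = M$ for every fixed $V \in O(n)$, so $M$ is a scalar matrix, and the scalar is read off from the trace. Substituting gives $\EE f(U) = \tau_m\bigl(A\,\tau_n(C)\,I_n\,A^\top\bigr) = \tau_n(C)\,\tau_m(A A^\top)$, which is exactly the centering demanded by the statement.

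The crux is the Lipschitz constant. A first-order expansion of $f$ at $U$, using cyclicity of the trace together with $C = C^\top$, yields the Euclidean gradient $\nabla_U f = \tfrac{2}{m}\,A^\top A\,U\,C$, so
\[
L \;=\; \tfrac{2}{m}\,\sup_{U \in O(n)}\|A^\top A\, U\, C\|_F.
\]
I would then simultaneously diagonalize $A^\top A = V_M \Lambda_M V_M^\top$ and $C = V_C \Lambda_C V_C^\top$, and substitute $\tilde U := V_M^\top U V_C$ (still orthogonal), obtaining $\|A^\top A\, U\, C\|_F^2 = \sum_{i,j} \beta_i^2 \gamma_j^2\, \tilde U_{ij}^2$. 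The entrywise-squared matrix $(\tilde U_{ij}^2)$ is doubly stochastic, so by Birkhoff's theorem the supremum over $O(n)$ is attained on a permutation matrix, and the rearrangement inequality then pins the maximum value at $\sum_i \beta_i^2 \gamma_i^2$, provided both spectra are sorted monotonically in the same direction. Hence $L^2 = \tfrac{4}{m^2}\sum_i \beta_i^2 \gamma_i^2$.

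Plugging this $L$ and the threshold $t = \tfrac{\eps}{m}\sqrt{\sum_i \beta_i^2 \gamma_i^2}$ into the concentration inequality collapses the exponent to $-c\,n\,\eps^2/4$, yielding the probabilistic tail bound with $\kappa_2 = c/4$; inverting $\delta = 2\exp(-\kappa_2 n \eps^2)$ then produces the equivalent $1-\delta$ confidence statement. I expect the main obstacle to be the Lipschitz step: specifically, the reduction of $\sup_{U \in O(n)}\|A^\top A\, U\, C\|_F$ to a linear program over the Birkhoff polytope via simultaneous diagonalization is what delivers the spectral factor $\sqrt{\sum_i \beta_i^2 \gamma_i^2}$, in place of the cruder $\|A^\top A\|_{\mathrm{op}}\|C\|_F$ one obtains from naive submultiplicativity, and this refinement is exactly what accounts for the $O(1/\sqrt{m})$ improvement over \Cref{thm:janzing}.
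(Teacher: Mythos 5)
Your overall strategy is the same as the paper's (Lipschitz concentration on the orthogonal group, Haar-invariance computation of the mean, a spectral bound on the gradient), but there is a genuine gap in your very first step: the concentration inequality you invoke is stated for Haar measure on $SO(n)$ (e.g.\ Lemma~\ref{lemma:SO_conc}), not on $O(n)$, and it is \emph{false} on $O(n)$ as you have written it. The group $O(n)$ is disconnected, and a $1$-Lipschitz function can fail to concentrate around its global mean: $f(U)=\det U$ is $1$-Lipschitz in the Frobenius metric on $O(n)$, has mean $0$, yet $|f(U)|=1$ almost surely, contradicting the bound $\Pr(|f-\EE f|\ge t)\le 2\exp(-cnt^2/L^2)$. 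The paper spends a dedicated step on exactly this point: it proves the bound for $U\sim SO(n)$ and then transfers it to $O(n)$ by exhibiting an orthogonal matrix $X$ of determinant $-1$ sharing the eigenbasis of $C$, so that $f$ has the same distribution on both connected components. Your argument needs this (or an equivalent symmetry argument); without it, the probabilistic claim for Haar measure on $O(n)$ is unsupported.

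A second, smaller issue is in the Lipschitz step: you bound $\sup_{U\in O(n)}\|A^\top A\,U\,C\|_F$, but to convert a gradient bound into a Lipschitz bound with respect to the Frobenius (chordal) metric you must integrate $\nabla f$ along the straight segment $tU+(1-t)V$, which leaves $O(n)$; this is why the paper proves Lemma~\ref{lem:conv_orthogonal} and bounds the gradient over convex combinations of orthogonal matrices. Your Birkhoff-polytope argument is in fact a nice alternative to the paper's von Neumann-inequality step and it patches easily: for $Z$ in the convex hull of $O(n)$ one has $\|Z\|_{\mathrm{op}}\le 1$, so the matrix of squared entries of $V_M^\top Z V_C$ is doubly \emph{sub}stochastic, and the same rearrangement bound $\sum_i\beta_i^2\gamma_i^2$ survives (one could instead work with the geodesic metric at the cost of a universal constant). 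With these two repairs — restricting to $SO(n)$ plus the determinant$-1$ symmetry, and extending the gradient bound to chords — your proof goes through and is essentially the paper's, with the Birkhoff/rearrangement computation replacing von Neumann's trace inequality.
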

This bound is stronger by at least a factor of $O(1/\sqrt{m})$ than the one proved in \cite{janzing2009telling}. To see this, observe
\[
\frac{1}{m} \sqrt{ \sum_{i=1}^n \beta_i^2 \gamma_i^2 } \leq \frac{1}{m} \sqrt{ \mathsf{rank}(A A^\top) \beta_1^2 \gamma_1^2 } \leq \frac{\norm{A A^\top} \norm{C}}{\sqrt{m}}.
\]

We shall prove the theorem in the rest of the section. Our proof proceeds via concentration inequalities for Lipschitz functions over $SO(n)$: this extends L\'evy's concentration of measure lemma as used in \cite{janzing2009telling}. 

\begin{lemma}[Theorem 5.2.7, \cite{HDP}]
\label{lemma:SO_conc}
Let $SO(n)$ be the group of orthogonal matrices, and let $\norm{\cdot}_F$ be the matrix Frobenius norm. For a function $f : SO(n) \rightarrow \R$, let $L_f$ be such that 
\[
\left| f(U) - f(V) \right| \leq L_f \norm{U - V}_F
\]
for any $U,V \in SO(n)$. Then if $U$ is sampled from the Haar measure over $SO(n)$, 
\[
\norm{f(U) - \mathbb{E} f(V)}_{\psi_2} \leq \frac{\alpha L_f}{\sqrt{n}}
\]
for some universal constant $\alpha \geq 0$. Here, the $\psi_2$-Orlicz norm $\norm{Z}_{\psi_2}$ (also known as the sub-Gaussian norm) is the smallest value $s$ such that $\mathbb{E} \left[e^{(Z/s)^2} \right] \leq 2$.
\end{lemma}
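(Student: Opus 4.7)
The plan is to deduce the claim from the standard chain Ricci curvature lower bound $\Rightarrow$ log-Sobolev inequality $\Rightarrow$ sub-Gaussian concentration of Lipschitz functions (Herbst's argument), instantiated on the compact Lie group $SO(n)$. First I would scale out $L_f$: replacing $f$ by $f/L_f$ reduces the statement to showing $\|f(U) - \mathbb{E} f\|_{\psi_2} \leq \alpha/\sqrt{n}$ whenever $f$ is $1$-Lipschitz with respect to $\|\cdot\|_F$. I also need to pass from Frobenius-Lipschitz to geodesic-Lipschitz: equipping $SO(n)$ with the bi-invariant Riemannian metric whose tangent-space inner product is the Frobenius inner product, the chord distance $\|U - V\|_F$ is at most the geodesic distance $d_g(U,V)$, so any Frobenius-Lipschitz function is geodesically $1$-Lipschitz, which is the form needed for the downstream analytic machinery.

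Next I would invoke the classical Ricci curvature lower bound $\mathrm{Ric} \geq c\,n$ for $SO(n)$ under this normalization, a consequence of the compact symmetric structure and the Killing form on $\mathfrak{so}(n)$. The Bakry--\'Emery criterion then yields a log-Sobolev inequality
\[
\mathrm{Ent}_\mu(g^2) \leq \frac{2}{c\,n}\,\mathbb{E}_\mu \|\nabla g\|^2
\]
for the Haar measure $\mu$ on $SO(n)$ and every smooth $g$, i.e.\ a log-Sobolev constant of order $1/n$.

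Then I would execute Herbst's argument: for $1$-Lipschitz $f$, applying the log-Sobolev inequality to $g = e^{\lambda f / 2}$ and using the pointwise bound $\|\nabla g\|^2 \leq (\lambda^2/4)\, g^2$ produces a differential inequality for $H(\lambda) := \mathbb{E}_\mu[e^{\lambda f}]$. Setting $G(\lambda) := \lambda^{-1} \log H(\lambda)$ collapses this to $G'(\lambda) \leq 1/(2cn)$; integrating from $0^+$ (where $G \to \mathbb{E} f$) yields the sub-Gaussian MGF bound $\mathbb{E}_\mu[e^{\lambda(f - \mathbb{E} f)}] \leq \exp(\lambda^2/(2cn))$ for every $\lambda \in \mathbb{R}$. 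The textbook equivalence between such MGF bounds and the $\psi_2$-Orlicz norm (up to universal constants) delivers $\|f(U) - \mathbb{E} f\|_{\psi_2} \leq \alpha/\sqrt{n}$, and restoring $L_f$ by scaling completes the argument.

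The main obstacle is pinning down the Ricci curvature lower bound with the correct linear-in-$n$ scaling under the right normalization of the metric: the concentration rate is very sensitive to how tangent vectors are measured, so one must be careful that the Frobenius convention used in the Lipschitz hypothesis matches the metric for which the curvature bound is stated; otherwise the factor of $\sqrt{n}$ in the conclusion is off. Once that curvature estimate is in place, the chord-vs-geodesic comparison, the Bakry--\'Emery criterion, Herbst's ODE, and the $\psi_2$/MGF equivalence are purely mechanical.
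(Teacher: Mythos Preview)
The paper does not prove this lemma; it is quoted verbatim from Vershynin's \emph{High-Dimensional Probability} (Theorem~5.2.7 there) and invoked as a black box. So there is no ``paper's own proof'' to compare against.

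Your outline is a correct and standard derivation. The chain Ricci lower bound $\Rightarrow$ log-Sobolev (Bakry--\'Emery) $\Rightarrow$ Herbst $\Rightarrow$ sub-Gaussian MGF $\Rightarrow$ $\psi_2$ bound is one of the canonical routes to concentration on compact Lie groups, and your handling of the two subtleties (chord-vs-geodesic distance, and matching the metric normalization so that $\mathrm{Ric}\geq c\,n$ really holds with the Frobenius convention) is exactly right. For the record, with the Frobenius inner product on $\mathfrak{so}(n)$ the bi-invariant metric has $\mathrm{Ric}=\tfrac{n-2}{4}\,g$, which gives the required linear-in-$n$ scaling. Vershynin's own treatment in the cited reference proceeds instead via Gromov's isoperimetric comparison theorem (curvature $\Rightarrow$ isoperimetry $\Rightarrow$ concentration) rather than through log-Sobolev and Herbst, but the two arguments are essentially equivalent in strength here and both hinge on the same Ricci estimate. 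Your route is arguably cleaner if one is already comfortable with the Bakry--\'Emery machinery.
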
 

We remark that this bound on the $\psi_2$-Orlicz norm is sufficient to establish subgaussian concentration inequalities via the following standard lemma:
\begin{lemma}[(2.14) of \cite{HDP}]
\label{lem:concentration}
Let $Z$ be a zero-mean random variable. Then
\[
\Pr\left( |Z| \geq t \right) \leq 2 \exp\left( - \frac{ t^2}{6 \norm{Z}_{\psi_2}^2} \right).
\]
\end{lemma}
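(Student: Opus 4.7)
My plan is to prove the tail inequality directly by applying Markov's inequality to the exponential moment that defines the $\psi_2$-Orlicz norm. This is the cleanest approach and in fact yields a slightly stronger constant than the one claimed, which comfortably implies the stated bound.

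Concretely, set $s := \|Z\|_{\psi_2}$. By definition of the $\psi_2$ norm, $\mathbb{E}[\exp(Z^2/s^2)] \leq 2$. Since $\exp(\cdot/s^2)$ is strictly increasing on $[0,\infty)$, the event $\{|Z| \geq t\}$ coincides with $\{\exp(Z^2/s^2) \geq \exp(t^2/s^2)\}$, so Markov's inequality applied to the nonnegative random variable $\exp(Z^2/s^2)$ gives
\[
\Pr(|Z| \geq t) \;\leq\; \frac{\mathbb{E}[\exp(Z^2/s^2)]}{\exp(t^2/s^2)} \;\leq\; 2\exp(-t^2/s^2) \;\leq\; 2\exp\bigl(-t^2/(6s^2)\bigr),
\]
establishing the lemma. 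Notably this argument does not actually use the zero-mean hypothesis; the hypothesis is listed because the canonical proof in \cite{HDP} takes an MGF route that does require centering, and the factor $6$ is an artifact of that route.

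If one prefers to match the precise pathway in the cited reference, the alternative plan is: (i) expand the $\psi_2$ bound as a power series to extract the even-moment estimate $\mathbb{E}[Z^{2k}] \leq 2\, k!\, s^{2k}$; (ii) Taylor-expand $\mathbb{E}[\exp(\lambda Z)]$, using $\mathbb{E}[Z] = 0$ to kill the linear term and Cauchy--Schwarz together with the even-moment estimate to control odd moments, yielding an MGF bound of the form $\mathbb{E}[\exp(\lambda Z)] \leq \exp(\tfrac{3}{2}\lambda^2 s^2)$ valid for all $\lambda \in \R$; (iii) apply the Chernoff bound and optimize at $\lambda = t/(3s^2)$ to obtain $\Pr(Z \geq t) \leq \exp(-t^2/(6s^2))$; (iv) repeat for $-Z$ and union bound to conclude. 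The only nontrivial step is the constant-tracking in (ii); everything else is routine.

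I expect no genuine obstacle: the direct Markov proof is essentially one line. The only bookkeeping concern, should one insist on reproducing the precise constant $6$ rather than the sharper constant $1$ the one-line proof yields, is tracking the $3/2$ in the MGF bound in step (ii) of the alternative route. Since the lemma as stated only demands the weaker constant, I would simply present the Markov proof and remark that it implies the stated inequality.
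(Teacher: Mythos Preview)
Your one-line Markov argument is correct: from $\mathbb{E}[\exp(Z^2/s^2)] \leq 2$ with $s = \|Z\|_{\psi_2}$, Markov's inequality on the nonnegative variable $\exp(Z^2/s^2)$ gives $\Pr(|Z| \geq t) \leq 2\exp(-t^2/s^2) \leq 2\exp(-t^2/(6s^2))$, and you are right that the zero-mean hypothesis is not needed for this route. There is nothing to compare against here, since the paper does not supply its own proof of this lemma---it is simply quoted from \cite{HDP} and used as a black box in the proof of Theorem~\ref{thm:ours}---so your proposal in fact supplies more than the paper does.
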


To prove the tightest possible bounds, we require the von Neumann trace inequality:

\begin{lemma}
\label{lem:von-neumann}
Let $A, B$ be symmetric matrices. Let $\alpha_1 \leq \dots\leq \alpha_n$ and $\beta_1 \leq  \dots \leq \beta_n$ be the eigenvalues of $A$ and $B$ respectively. Then 
\[
\left| \tr{AB} \right| \leq \sum_{i=1}^n \alpha_i \beta_i.
\]
\end{lemma}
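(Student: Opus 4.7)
The plan is to rewrite $\tr{AB}$ as a linear function of a doubly stochastic matrix, and then invoke the Birkhoff--von Neumann theorem together with the rearrangement inequality.

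First, I would use separate spectral decompositions: since $A$ and $B$ are symmetric, write $A = P\,\operatorname{diag}(\alpha_1,\ldots,\alpha_n)\,P^\top$ and $B = Q\,\operatorname{diag}(\beta_1,\ldots,\beta_n)\,Q^\top$ with $P, Q \in O(n)$ and the eigenvalues listed in increasing order on the diagonals. Setting $W = P^\top Q$, which is orthogonal, cyclic invariance of trace gives
\[
\tr{AB} = \sum_{i,j=1}^n \alpha_i \beta_j W_{ij}^2.
\]
This recasts the left-hand side as a bilinear expression in the eigenvalues weighted by the matrix $M$ with entries $M_{ij} = W_{ij}^2$. Since $W$ is orthogonal, every row and every column of $M$ sums to $1$, and its entries are non-negative, so $M$ is doubly stochastic.

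Next I would apply the Birkhoff--von Neumann theorem: every doubly stochastic matrix is a convex combination of permutation matrices. Hence there exist $\lambda_\sigma \geq 0$ indexed by $\sigma \in S_n$ with $\sum_\sigma \lambda_\sigma = 1$ such that
\[
\tr{AB} = \sum_\sigma \lambda_\sigma \sum_{i=1}^n \alpha_i \beta_{\sigma(i)}.
\]
The rearrangement inequality then bounds each inner sum via $\sum_i \alpha_i \beta_{n-i+1} \leq \sum_i \alpha_i \beta_{\sigma(i)} \leq \sum_i \alpha_i \beta_i$, and the convex combination inherits the same two-sided bound, yielding $\sum_i \alpha_i \beta_{n-i+1} \leq \tr{AB} \leq \sum_i \alpha_i \beta_i$.

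Finally, to upgrade the one-sided bound to an absolute-value bound, I would use that the lemma is invoked in the paper on positive semi-definite matrices such as $C$ and $AA^\top$. In that regime all $\alpha_i, \beta_i \geq 0$ and $\tr{AB} = \tr{A^{1/2} B A^{1/2}} \geq 0$, so $|\tr{AB}| = \tr{AB} \leq \sum_i \alpha_i \beta_i$ follows immediately. The main subtlety I anticipate is precisely this absolute-value step: for genuinely indefinite symmetric matrices $\sum_i \alpha_i \beta_i$ may even be negative, and to obtain a true absolute-value bound one would instead have to match signs carefully or pass to singular values in the spirit of the general von Neumann trace inequality. In the PSD setting relevant to the applications in this paper, however, this issue does not arise and the pipeline spectral decomposition, Birkhoff, rearrangement proves the claim cleanly.
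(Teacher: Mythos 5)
Your proof is correct, and in fact the paper offers no proof to compare against: Lemma~\ref{lem:von-neumann} is simply quoted as the classical von Neumann trace inequality. Your pipeline---separate spectral decompositions, the observation that $M_{ij}=(P^\top Q)_{ij}^2$ is doubly stochastic, Birkhoff--von Neumann, then rearrangement---is the standard self-contained derivation and correctly yields the two-sided bound $\sum_i \alpha_i\beta_{n+1-i} \leq \tr{AB} \leq \sum_i \alpha_i\beta_i$. Your caveat about the absolute value is also well placed: as literally stated the lemma is false for general symmetric matrices. For instance, $A=\mathrm{diag}(0,1)$ and $B=\mathrm{diag}(0,-1)$ have sorted eigenvalues $(0,1)$ and $(-1,0)$, so $\sum_i\alpha_i\beta_i=0$ while $\left|\tr{AB}\right|=1$; the fully general fix is the singular-value form $\left|\tr{AB}\right|\leq\sum_i\sigma_i(A)\sigma_i(B)$, as you suggest. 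Your PSD-case resolution is exactly what the paper needs: in both invocations (with $B^2$ and $Z^\top C^2 Z$ in Theorem~\ref{thm:ours}, and with $B^{2p}$ and $D^{2p}$ in Theorem~\ref{thm:ours2}) the two matrices are positive semidefinite and only the upper bound on the trace is used, so the absolute value is harmless there and your argument covers all the uses made of the lemma.
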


We will finally require a simple technical claim about convex combinations of orthogonal matrices.
\begin{lemma}
\label{lem:conv_orthogonal}
Let $U,V \in \R^{n \times n}$ be orthogonal matrices. For $t \in [0,1]$, define $Z_t = t U + (1-t) V$. Then for any positive semidefinite matrix $A$ and $i \in [n]$, we have
\[
\lambda_i( Z_t A Z_t^\top) \leq \lambda_i(A)
\]
where $\lambda_i(A)$ denotes the $i^{th}$ largest eigenvalue of $A$.

\end{lemma}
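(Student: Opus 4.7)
The plan is to reduce the claim to the operator-norm fact that $Z_t$ is a contraction, and then transfer that contractivity to eigenvalues of $Z_t A Z_t^\top$ by an application of a standard singular-value inequality. The first step is immediate: since $U$ and $V$ are orthogonal, $\norm{U} = \norm{V} = 1$ in operator norm, so by the triangle inequality $\norm{Z_t} \leq t \norm{U} + (1-t) \norm{V} = 1$ for every $t \in [0,1]$.

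To convert this into eigenvalue domination, I would factor $A = B^\top B$ with $B = A^{1/2}$ the PSD square root, so that
\[
Z_t A Z_t^\top = (B Z_t^\top)^\top (B Z_t^\top).
\]
Thus $\lambda_i(Z_t A Z_t^\top) = \sigma_i(B Z_t^\top)^2$, where $\sigma_i$ denotes the $i$-th largest singular value. Now I invoke the well-known inequality $\sigma_i(MN) \leq \sigma_i(M) \norm{N}$, which is itself a one-line consequence of the Eckart--Young characterization: if $M_{i-1}$ is a best rank-$(i-1)$ approximation of $M$, then $M_{i-1} N$ has rank at most $i-1$, so $\sigma_i(MN) \leq \norm{MN - M_{i-1} N} \leq \norm{M - M_{i-1}} \norm{N} = \sigma_i(M) \norm{N}$. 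Applied with $M = B$ and $N = Z_t^\top$, this gives $\sigma_i(B Z_t^\top) \leq \sigma_i(B) \norm{Z_t} \leq \sigma_i(B)$, and squaring yields $\lambda_i(Z_t A Z_t^\top) \leq \sigma_i(B)^2 = \lambda_i(B^\top B) = \lambda_i(A)$, which is the claim.

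There is no substantive obstacle; the proof is really a two-line composition of standard facts, and the hypothesis that $t \in [0,1]$ enters only through the triangle inequality bound on $\norm{Z_t}$. A more direct Courant--Fischer argument, taking $\widetilde W = (Z_t^\top)^{-1}(W^\ast)$ for $W^\ast$ an $(n-i+1)$-dimensional subspace realizing $\lambda_i(A)$ and checking via the rank-nullity identity $\dim \widetilde W = \dim \ker Z_t^\top + \dim(W^\ast \cap \operatorname{Im} Z_t^\top) \geq n-i+1$, also works but requires more bookkeeping when $Z_t$ fails to be invertible; I prefer the singular-value route for this reason.
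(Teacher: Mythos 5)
Your proof is correct and follows essentially the same route as the paper: factor $A$ into a square, identify $\lambda_i(Z_t A Z_t^\top)$ with a squared singular value, and exploit the fact that the convex combination $Z_t$ is a contraction (via the triangle inequality). The only cosmetic difference is that you justify the key step $\sigma_i(B Z_t^\top) \leq \sigma_i(B)$ through the Eckart--Young-based inequality $\sigma_i(MN) \leq \sigma_i(M)\norm{N}$, whereas the paper applies the Courant--Fischer min-max characterization directly to $\norm{Z_t B v}$; both are valid instantiations of the same idea.
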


\begin{proof}
We observe that since $A$ is positive semidefinite, there exists a symmetric matrix $B$ such that $A = B^2$. Thus, we have
\[
\lambda_i( Z_t A Z_t^\top ) = \sigma_i (Z_t B)^2
\]
where $\sigma_i$ denotes the $i^{th}$ largest singular value of the input matrix. By the min-max characterization of singular values, we have 
\[
\sigma_i( Z_t B) = \max_{\substack{W \\ \mathsf{dim}(W) = i}} \min_{\substack{v \in W \\ \norm{v} = 1}} \norm{Z_t B v}. 
\]
Now, triangle inequality of the Euclidean norm yields 
\begin{align*}
\norm{Z_t B v} &\leq t \norm{U B v} + (1-t) \norm{V B v}\\
&= t \norm{Bv} + (1-t) \norm{Bv} = \norm{Bv}
\end{align*}
where the equality in the second line follows from $U$ and $V$ be orthogonal matrices. Applying this fact yields $\sigma_i(Z_t B) \leq \sigma_i(B)$, and therefore
\[
\lambda_i(Z_t A Z_t^\top) = \sigma_i (Z_t B)^2 \leq \sigma_i(B)^2  = \lambda_i(A). 
\]

\end{proof}

\begin{proof}[Proof of Theorem~\ref{thm:ours}]
We first show the result holds if $U$ is drawn uniformly from $SO(n)$: we extend this result to the general setting of $O(n)$ at the end. Define $B = A^\top A$, and let $f(U) = \tr{B U C U^\top}$, for $U \in SO(n)$. We begin by showing $\mathbb{E} \left[ f(U)\right] = \frac{1}{n} \tr{B} \tr{C}$. We observe 
\begin{align*}
\mathbb{E}[f(U)] &= \mathbb{E}[ \tr{B U C U^\top} ] = \mathbb{E}[ \left\langle B, U C U^\top \right\rangle] \\
&=\left\langle B,  \mathbb{E}[ U C U^\top ] \right\rangle = \left\langle B, \frac{\tr{C}}{n} I \right\rangle\\
&= \frac{1}{n} \tr{B} \tr{C}.
\end{align*}
Here, we used the linearity of trace and the fact that $\mathbb{E}[ U C U^\top ] = \frac{1}{n} \tr{C} I$. 


We now bound $L_f$. Observe that $f$ is differentiable over the space of matrices. For any $V \in SO(n)$, we define $Z_t = t U + (1-t) V$: we note
\begin{align*}
&f(U) - f(V) = \\ &\int_{t=0}^1 \nabla f(Z_t) \bullet (U-V) dt \leq \int_{t=0}^1 \norm{ \nabla f(Z_t) }_F \norm{U-V}_F dt \\
&\leq \left(\max_{t \in [0,1]} \norm{\nabla f(Z_t) }_F \right) \norm{U-V}_F.
\end{align*}
The first inequality follows from the Cauchy-Schwarz inequality $\tr{XY}^2 \leq \tr{X^\top X} \tr{Y^\top Y}$.  Thus it suffices to bound the maximum Frobenius norm of $\nabla f(Z_t)$ for $t \in [0,1]$. Straightforward calculation reveals $\nabla f(Z) = 2 C Z^\top B$ (where we used that $B$ and $C$ are symmetric). 

We observe that $C^2$ is a positive semidefinite matrix and that $Z_t$ is a convex combination of orthogonal matrices. Thus we may apply Lemma~\ref{lem:conv_orthogonal}: this implies $\lambda_i(Z_t^\top C^2 Z_t) \leq \lambda_i(C^2)$. If $\beta_i$ and $\gamma_i$ are the eigenvalues of $B$ and $C$ respectively, we have
\[
\begin{split}
    \norm{C Z^\top B}_F^2 &= \tr{B Z^\top C^2 Z B} = \tr{B^2 Z^\top C^2 Z} \leq \sum_{i=1}^n \beta_i^2 \gamma_i^2
\end{split}
\]
by von Neumann's inequality (Lemma~\ref{lem:von-neumann}). Thus
\[
\norm{\nabla f(Z_t)}_F \leq 2 \sqrt{ \sum_{i=1}^n \beta_i^2 \gamma_i^2 }.
\]
Choosing $L_f = 2 \sqrt{ \sum_{i=1}^n \beta_i^2 \gamma_i^2 }$ in our invocation of Theorem~\ref{lemma:SO_conc}, we obtain 
\[
\norm{f(U) - \mathbb{E}\left[ f(U) \right]}_{\psi_2} \leq \frac{2 \alpha}{\sqrt{n}} \sqrt{ \sum_{i=1}^n \beta_i^2 \gamma_i^2}
\]
for some universal constant $\alpha$. Lemma~\ref{lem:concentration} then yields
\[
\begin{split}
    &\Pr_{U \sim SO(n)} \left( \left| f(U) - \mathbb{E} \left[ f(U) \right] \right| \geq t \right)\leq \\&2 \exp \left(- \frac{n t^2}{ 24 \alpha^2 } \left( \sum_{i=1}^n \beta_i^2\gamma_i^2 \right)^{-1} \right). 
\end{split}
\]
With $t = \eps \sqrt{\sum_{i=1}^n \beta_i^2\gamma_i^2 }$, this implies
\[
\begin{split}
&\Pr_{U \sim SO(n)} \bigg( \left| \frac{1}{m} \tr{ A U C U^\top A^\top} - \frac{1}{mn} \tr{C} \tr{A A^\top} \right|\\
&\qquad\qquad\qquad \geq \frac{\eps}{m} \sqrt{\sum_{i=1}^n \beta_i^2\gamma_i^2 } \bigg) \leq 2 e^{-\frac{ n \eps^2}{24 \alpha^2}}. 
\end{split}
\]
The proof follows by choosing $\kappa_2 = 24 \alpha^2$. We now show the same bound holds if $U \sim O(n)$. First note that for each $X\in O(n)\backslash SO(n)$ the transformation $U\mapsto XU$ takes the connected component $O(n)\backslash SO(n)$ of $O(n)$ to $SO(n)$. If further $X$ satisfies that $f(U)=f(XU)$ for all $U\in O(n)$, that would imply that $f(U)$ has the same distribution whether $U$ is sampled from $SO(n)$ or from $O(n)$, proving the theorem. To achieve this, choose any $X$ that has the same eigenbasis as $C$, with eigenvalues in $\{1,-1\}$ and with determinant $-1$.
\end{proof}

\begin{remark} \rm
We observe that the original proof of Theorem~\ref{thm:janzing} is faulty: using the proof method in \citep{janzing2009telling} we can only ensure the failure probability is at most $n \exp \left( \kappa_1 (n-1) \eps^2 \right)$. The original proof given there uses L\'evy's concentration lemma on each column of the matrix $U$ drawn from $O(n)$. 
However, while the marginal distribution of each column is a uniform vector on $\mathbb{S}^{n-1}$ (i.e. the $n$-dimensional sphere), 
the columns are not independent. This induces a loss of a factor of $n$ from the union bound over the $n$ columns of $U$. 
\end{remark}

\subsection{Extension to nonlinear trace functionals}

Next, we describe an extension to the (Linear) Trace Method to \emph{nonlinear} functions of the input covariance matrices. For any $p \geq 1$, we define a variant of the Delta estimator as 
\[
\Delta^{(p)}_{X \rightarrow Y} :=   \log \frac{\tau_m \left( \left( A  C_{XX} A^\top \right)^p \right) }{\mathbb{E}_{U \sim O(n)} \tau_m  \left( \left( A  U C_{XX} U^\top A^\top \right)^p \right)} .
\]
For the case of $p = 1$, this estimator is equivalent to the standard Delta estimator \eqref{eqn:deltaEstimator}. We show that under similar generative assumptions to those in Theorem~\ref{thm:ours}, the quantities in the $\Delta^{(p)}$ estimator concentrate around their mean. 

\begin{theorem}
\label{thm:ours2}
Under the same assumptions as in Theorem~\ref{thm:janzing}, denote $\beta_i, \gamma_i$ as the eigenvalues of matrices $A^\top A$ and $C$ respectively. For any $p \geq 1$, we have
\begin{align*}
& \bigg| \tau_m \left( \left( A  U C U^\top A^\top \right)^p \right)\\
& - \mathbb{E}_{U \sim O(n)} \left[ \tau_m \left( \left( A U C U^\top A^\top \right)^p \right) \right] \bigg| \leq \frac{p \eps}{m} \sqrt{\sum_{i=1}^n \beta_i^{2p} \gamma_i^{2p} } 
\end{align*}
with probability at least $1 - 2\exp(-\kappa_2  n \eps^2)$ for some universal constant $\kappa_2$.
\end{theorem}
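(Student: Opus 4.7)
The plan is to follow the strategy used for Theorem~\ref{thm:ours}: view $f(U) := \tr{(A U C U^\top A^\top)^p}$ as a Lipschitz function on $SO(n)$, apply Lemma~\ref{lemma:SO_conc} and Lemma~\ref{lem:concentration} to control its deviation from its mean, and then extend from $SO(n)$ to $O(n)$ by the same eigenbasis-sign trick used at the end of the previous proof. The only genuinely new work is bounding the Lipschitz constant. Setting $M(Z) = A Z C Z^\top A^\top$ and differentiating via $d\tr{M^p} = p\tr{M^{p-1} dM}$ (valid by functional calculus for real $p \geq 1$), together with the symmetry of $M$ and $C$, I would compute
\[
\nabla f(Z) \;=\; 2p\, A^\top M(Z)^{p-1} A Z C \;=\; 2p\, P(Z)\, Z C,
\]
where $P(Z) := A^\top M(Z)^{p-1} A$ is symmetric positive semidefinite; this specializes to $\nabla f = 2BZC$ when $p=1$, matching the gradient in Theorem~\ref{thm:ours}. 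For $Z_t = tU + (1-t)V$ a convex combination of orthogonal matrices, cycling the trace and then applying Lemma~\ref{lem:conv_orthogonal} to $C^2$ and the von Neumann trace inequality (Lemma~\ref{lem:von-neumann}) would give
\[
\|\nabla f(Z_t)\|_F^2 \;=\; 4p^2\tr{P(Z_t)^2 \cdot Z_t C^2 Z_t^\top} \;\leq\; 4p^2 \sum_{i=1}^n \sigma_i(P(Z_t))^2\, \gamma_i^2.
\]

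The crux is then a sharp bound on $\sigma_i(P(Z_t))$. Unlike the $p=1$ case where $P = B$ gives $\sigma_i(P) = \beta_i$ immediately, for general $p$ the spectrum of $P(Z_t) = A^\top M(Z_t)^{p-1} A$ is not explicit, so I would invoke Horn's multiplicative log-majorization $\lambda_i^\downarrow(XY) \prec_{\log} \lambda_i^\downarrow(X)\lambda_i^\downarrow(Y)$ for PSD $X,Y$ iteratively. The nonzero eigenvalues of $M(Z_t)$ coincide with those of the PSD product $(Z_t^\top B Z_t)\,C$; by a second application of Lemma~\ref{lem:conv_orthogonal}, $\lambda_i(Z_t^\top B Z_t) \leq \beta_i$, so $\lambda_i^\downarrow(M(Z_t)) \prec_{\log,w} \beta_i\gamma_i$ (both sides sorted decreasingly). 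Raising to the $(p-1)$-th power preserves weak log-majorization, and one further Horn estimate---combining $\lambda_i^\downarrow(P(Z_t)) = \lambda_i^\downarrow(M(Z_t)^{p-1} AA^\top)$ with $\lambda_i(AA^\top) = \beta_i$---gives $\sigma_i(P(Z_t)) \prec_{\log,w} \beta_i^p \gamma_i^{p-1}$. Multiplying by the decreasing sequence $\gamma_i$ and then squaring preserves weak log-majorization, and weak log-majorization of nonnegative sequences implies weak majorization, so $\sum_i \sigma_i(P(Z_t))^2 \gamma_i^2 \leq \sum_i \beta_i^{2p}\gamma_i^{2p}$. Hence $L_f \leq 2p\sqrt{\sum_i \beta_i^{2p}\gamma_i^{2p}}$, and feeding this into Lemma~\ref{lemma:SO_conc} and Lemma~\ref{lem:concentration} with $t = p\eps\sqrt{\sum_i \beta_i^{2p}\gamma_i^{2p}}$ yields the claimed probability bound with the same universal constant $\kappa_2$ as in Theorem~\ref{thm:ours}; the extension from $SO(n)$ to $O(n)$ then follows verbatim from the previous proof.

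The hard part will be the iterated log-majorization. For $p=1$ the estimate $\tr{B^2 Z_t^\top C^2 Z_t} \leq \sum \beta_i^2 \gamma_i^2$ drops out from a single application of von Neumann, but for general $p$ the spectrum of the sandwiched product $A^\top M^{p-1} A$ must be controlled by chaining Horn-type eigenvalue inequalities while carefully tracking decreasing sorts, and Lemma~\ref{lem:conv_orthogonal} must be invoked repeatedly to compensate for $Z_t$ not being orthogonal (so that one cannot directly identify, e.g., the spectrum of $M(Z_t)$ with that of $BC$). Ensuring that the chain terminates with exactly $\sum \beta_i^{2p}\gamma_i^{2p}$, rather than a loose operator-norm bound such as $\|A\|^{4p}\|C\|^{2p-2}\sum_i \gamma_i^2$, is what preserves the sharpness of the stated tail inequality.
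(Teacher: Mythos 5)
Your proposal is correct and shares the paper's overall skeleton---Lipschitz concentration over $SO(n)$ via Lemma~\ref{lemma:SO_conc} and Lemma~\ref{lem:concentration}, the same gradient (your $2p\,A^\top M(Z)^{p-1}AZC$ is, up to transposition, the paper's $2p(CU^\top BU)^{p-1}CU^\top B$ with $B=A^\top A$), the same Lipschitz constant $2p\sqrt{\sum_i\beta_i^{2p}\gamma_i^{2p}}$, and the same sign-flip extension from $SO(n)$ to $O(n)$---but it bounds the gradient norm by a genuinely different route. The paper cycles the trace into $4p^2\tr{(DB)^p(BD)^p}$ with $D=UCU^\top$ and then invokes the Lieb--Thirring inequality together with matrix Cauchy--Schwarz to reach $4p^2\tr{B^{2p}D^{2p}}$, finishing with Lemma~\ref{lem:conv_orthogonal} and von Neumann (Lemma~\ref{lem:von-neumann}); you instead keep the factorization $\norm{\nabla f(Z_t)}_F^2 = 4p^2\tr{P(Z_t)^2\, Z_tC^2Z_t^\top}$, apply von Neumann and Lemma~\ref{lem:conv_orthogonal} once, and control the spectrum of $P(Z_t)=A^\top M(Z_t)^{p-1}A$ by chaining Horn's log-majorization with a second use of Lemma~\ref{lem:conv_orthogonal}, then passing from weak log-majorization to weak majorization. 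Both routes land on exactly $\sum_i\beta_i^{2p}\gamma_i^{2p}$, so the tail bound and the constant $\kappa_2$ come out identical. A point in your favor: your derivation never uses orthogonality of the interpolant, whereas the paper's chain of equalities inserts $U^\top U$ (treated as the identity) at a stage where $U=Z_t$ is only a convex combination of orthogonal matrices, so your majorization argument is the more careful way to justify the Lipschitz estimate along the segment. The price is the heavier bookkeeping you yourself flag---tracking decreasing sorts, padding with zero eigenvalues when the ranks differ, and the validity of $d\,\tr{M^p}=p\tr{M^{p-1}\,dM}$ for non-integer $p\geq 1$---but none of these creates a gap.
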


\begin{proof}



We show how to extend the analysis of Theorem~\ref{thm:ours} to the function $g(U) = \tr{(BUCU^\top)^p}$, for any $p \geq 1$. We again seek to apply the concentration of Lipschitz functions on the sphere. By direct calculation we have 
\[
\nabla g(U) = 2p (CU^\top B U)^{p-1} C U^\top B
\]

Now, define $D = U C U^\top$, we have
\begin{align*}
&\norm{\nabla g(U)}_F^2 \\
&\ = 4 p^2 \tr{(C U^\top B U)^{p-1} C U^\top B^2 U C (U^\top B U C)^{p-1}} \\
&\ = 4p^2 \tr{C U^\top B (DB)^{p-1}  (BD)^{p-1} B U C } \\ 
&\ = 4p^2 \tr{C U^\top B (DB)^{p-1}  (BD)^{p-1} B U C U^\top U} \\
&\ = 4p^2 \tr{(DB)^p (BD)^p}. 
\end{align*}

We now recall the Lieb-Thirring inequality: for any symmetric matrices $A, B$ and any $p \geq 1$, $\tr{(AB)^p} \leq \tr{A^p B^p}$. Employing this and the matrix Cauchy-Schwarz inequality yields
\[
\norm{\nabla g(U)}_F^2 \leq 4p^2 \tr{ (BD)^{2p}} \leq 4p^2 \tr{ B^{2p} D^{2p}}. 
\]
Now for any $U = (1-t) Y + t Z$ with $Y,Z \in O(n)$ Lemma~\ref{lem:conv_orthogonal} implies
\[
\lambda_i (D) = \lambda_i \left( U C U^\top \right) \leq \lambda_i (C). 
\]
Applying von Neumann's inequality as before yields 
\[
\norm{\nabla g(U)}_F \leq 2p \sqrt{ \sum_{i=1}^n \beta_i^{2p} \gamma_i^{2p}}
\]
where $\beta_i$, $\gamma_i$ are $B$, $C$'s eigenvalues respectively. Repeating the analysis of Theorem~\ref{thm:ours} implies the claim. 

\end{proof}


With this concentration result in hand, we will describe some potential use cases for such generalized trace estimators. We observe that
\[
\begin{split}
    \frac{p \epsilon}{m} \sqrt{ \sum_{i=1}^n \beta_i^{2p} \gamma_i^{2p}} &\leq \frac{p \epsilon}{m} \sqrt{\mathsf{rank}(A A^\top) \beta_1^{2p} \gamma_1^{2p}} \\&\leq \frac{p \epsilon}{\sqrt{m}} \norm{A A^\top}^p \norm{C}^p.
\end{split}
\]
After taking a logarithm, the resulting bounds on the nonlinear Delta estimator in the causal direction are at most a factor of $p$ off of the corresponding bounds from \cref{thm:ours}. However, in cases where either $A A^\top$ or $C$'s eigenvalues are non-uniform, the above bound is loose. More specifically if $C$'s spectrum is dominated by a small number of eigendirections the resulting confidence bounds on the Delta estimator can improve by roughly a factor of $\sqrt{m}$: this is often the case in practice and warrants future experimental investigation.



\section{Ridge-regularized Delta Estimator}
\label{sec:rr}
In this section, we describe a novel empirical estimator for $\Delta_{X \rightarrow Y}$, based on the standard ridge estimator of the coefficient matrix $A$. Our estimator operates under the ``big-data'' regime: we assume the causal variable $X$ lies in $\R^n$ and that we have $cn$ samples $(X_i, Y_i)$ from a Linear causal model (\Cref{def:model}) for fixed $c \geq 0$. We define the ridge-regularized Delta estimator as
\begin{equation}
\label{eqn:ridge}
\widetilde{\Delta}_{X \rightarrow Y}^{\lambda} := \log \frac{\tau_m ( \hat{A}_{\lambda} C_{XX} \hat{A}_{\lambda}^\top )} {\tau_m(\hat{A}_{\lambda} \hat{A}_{\lambda}^\top) \tau_n (C_{XX}) } 
\end{equation}
where $C_{XX}$ is the empirical estimate for the covariance of $X$ and the estimate of the structural matrix $A$ is obtained by 
\[
\hat{A}_{\lambda} = \arg\min_{A} \frac{1}{cn}  \sum_{i=1}^{cn} \norm{ A X_i - Y_i }_2^2 + \lambda \norm{A}_F^2
\]
with $\lambda \geq 0$ a chosen hyper-parameter. Algorithm~\ref{algo:ridge} shows the complete procedure to test causality using the proposed estimator. 

\RestyleAlgo{ruled}
\begin{algorithm}[t]
\caption{Ridge-regularized Delta Estimator for Causality Testing}
\label{algo:ridge}
\KwInput{Samples $(X_i, Y_i)$ from a Linear causal model, with $X_i \in \R^n$, $Y_i \in \R^m$, ridge parameter $\lambda \geq 0$, significance level $\xi \geq 0$}
\KwOutput{Prediction for the causal direction $X \rightarrow Y$, $Y \rightarrow X$, or inconclusive}
$C_{XX} \gets $ estimate for covariance of $X$ \;

$C_{YY} \gets $ estimate for covariance of $Y$ \;

$C_{XY} \gets $ estimate for cross-covariance of $X$ and $Y$ \;

$\hat{A}_{XY} = \left( \sum_i X_i X_i^\top + \lambda I\right)^{-1} \sum_i X_i Y_i$,\quad $\hat{A}_{YX} = \left( \sum_i Y_i Y_i^\top + \lambda I\right)^{-1} \sum_i Y_i X_i$ \;

$\widetilde{\Delta}^{\lambda}_{X \rightarrow Y} = \log \left(\dfrac{ \tau_m ( \hat{A}_{XY} C_{XX} \hat{A}_{XY}^\top )}{\tau_m (\hat{A}_{XY} \hat{A}_{XY}^\top ) \tau_n (C_{XX})} \right), $\quad $\widetilde{\Delta}^{\lambda}_{Y \rightarrow X} = \log \left(\dfrac{ \tau_m ( \hat{A}_{YX} C_{YY} \hat{A}_{YX}^\top )}{\tau_m (\hat{A}_{YX} \hat{A}_{YX}^\top ) \tau_n (C_{YY}) }\right) $\;

\uIf{$\widetilde{\Delta}^{\lambda}_{X \rightarrow Y} \geq \widetilde{\Delta}^{\lambda}_{Y \rightarrow X} + \xi $}{
Return ``$X$ is the cause of $Y$''\;
}
\uElseIf{$\widetilde{\Delta}^{\lambda}_{Y \rightarrow X} \geq \widetilde{\Delta}^{\lambda}_{X \rightarrow Y} + \xi $}{
Return ``$Y$ is the cause of $X$''\;
}
\Else{
Return ``Inconclusive.''
}
\end{algorithm}

In the rest of the section, we prove upper and lower bounds on the terms of our empirical Delta estimator relying on ridge regression in the presence of noise and the limit of $n \rightarrow \infty$. To facilitate the practical usage of our estimator, we propose a strategy to choose an appropriate regularization parameter and to relate these bounds to an overall bound on $\widetilde{\Delta}_{X \rightarrow Y}^{\lambda}$.

\subsection{Theoretical Properties of $\widetilde{\Delta}_{X \rightarrow Y}^{\lambda}$}



In this section, we provide asymptotic upper and lower bounds on the terms of $\widetilde{\Delta}^{\lambda}$ given (noisy) data in the causal direction, under the additional assumption that the data is zero-mean. Unfortunately, due to the lack of linearity we were unable to give bounds on the asymptotic behavior of $\widetilde{\Delta}^{\lambda}$ itself: we instead prove that $\tau_m (\hat{A}_{\lambda} C_{XX} \hat{A}_{\lambda}^\top )$ and $\tau_m (\hat{A}_{\lambda} \hat{A}_{\lambda}^\top )$ lie close to their corresponding ground-truth values $\tau_m ( A \Sigma_{XX} A^\top)$ and $\tau_m ( A A^\top) $ respectively. 

\begin{lemma}
\label{lem:ridge_main}
In the setup above, under the assumption that the model is Linear causal (Definition \ref{def:model}), and the additional assumptions that the random noise $E$ satisfies $\mathbb{E} \left[ E E^\top \right] \preceq \sigma^2 I$ and $\mathbb{E} \left[ X \right] = 0$, and letting $\lambda$ be chosen uniformly at random between $0$ and $\lambda'$, we have the following:
\[
\begin{split}
&-\frac{\lambda' \norm{A}_F^2}{n} \tr{ (p_n \Lambda_{XX} + \lambda' I)^{-1} } - \mathcal{E}\\
&\qquad\qquad\leq\mathbb{E} \left[ \norm{\hat{A}_{\lambda}}_F^2 - \norm{A}_F^2 \right]\leq \\
&\left(\frac{\sigma^2}{cn}-\frac{\lambda' \norm{A}_F^2}{n}\right) \tr{ (p_n \Lambda_{XX} + \lambda' I)^{-1} } + \mathcal{E}
\end{split}
\]
where $\mathcal{E} \rightarrow 0$ as $n \rightarrow \infty$, and where $p_n$ is the unique solution to
\[
1 -p_n = \frac{p_n}{cn} \tr{ \Sigma_{XX} \left( p_n \Sigma_{XX}  + \lambda' I \right)^{-1} }.
\]
\end{lemma}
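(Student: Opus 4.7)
The plan is to compute $\hat{A}_\lambda$ in closed form, expand $\|\hat{A}_\lambda\|_F^2 - \|A\|_F^2$ into signal, cross, and noise contributions, exploit the uniform randomization of $\lambda$ on $[0,\lambda']$ to telescope certain integrals via integration by parts, and finally invoke random-matrix-theory deterministic equivalents together with Haar-invariance of $U$ to identify the limiting form.

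First I would derive $\hat{A}_\lambda = C_{YX}(C_{XX}+\lambda I)^{-1}$ from the ridge normal equations and substitute $Y_i = AX_i + E_i$ to obtain $\hat{A}_\lambda = (A C_{XX}+N)(C_{XX}+\lambda I)^{-1}$ with $N := \tfrac{1}{cn}\sum_i E_i X_i^\top$. Expanding yields
\[
\|\hat{A}_\lambda\|_F^2 = \tr{A^\top A\, C_{XX}^2(C_{XX}+\lambda I)^{-2}} + 2\tr{A\, C_{XX}(C_{XX}+\lambda I)^{-2} N^\top} + \tr{N(C_{XX}+\lambda I)^{-2} N^\top}.
\]
The cross term vanishes in expectation because $E$ is independent of $X$ with $\mathbb{E}[E]=0$. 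For the signal term, the functional-calculus identity $C_{XX}^2(C_{XX}+\lambda I)^{-2} = I - 2\lambda(C_{XX}+\lambda I)^{-1} + \lambda^2(C_{XX}+\lambda I)^{-2}$ reduces its deviation from $\|A\|_F^2$ to resolvent traces in $A^\top A$ against $C_{XX}$.

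Next I would exploit the uniform distribution of $\lambda$ on $[0,\lambda']$. Integration by parts gives $\int_0^{\lambda'}\lambda^2(C_{XX}+\lambda I)^{-2}d\lambda = 2\int_0^{\lambda'}\lambda(C_{XX}+\lambda I)^{-1}d\lambda - (\lambda')^2(C_{XX}+\lambda' I)^{-1}$, which after averaging collapses the signal contribution to exactly $-\lambda'\tr{A^\top A(C_{XX}+\lambda' I)^{-1}}$. Similarly, $\int_0^{\lambda'}(C_{XX}+\lambda I)^{-2}d\lambda = \lambda' C_{XX}^{-1}(C_{XX}+\lambda' I)^{-1}$ by the fundamental theorem of calculus, so averaging the noise term and taking conditional expectation in $E$ using $\mathbb{E}[EE^\top]\preceq\sigma^2 I$ produces an upper bound proportional to $\tfrac{\sigma^2}{cn}\tr{(C_{XX}+\lambda' I)^{-1}}$; for the lower bound the fact that this term is nonnegative contributes nothing, which matches the asymmetry between the two bounds in the statement.

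Finally I would invoke the Marchenko-Pastur-type deterministic equivalent: for $C_{XX}$ built from $cn$ Gaussian samples with population covariance $\Sigma_{XX}$, one has $\tr{M(C_{XX}+\lambda' I)^{-1}} = \tr{M(p_n\Sigma_{XX}+\lambda' I)^{-1}} + o(n)$ for a suitably bounded deterministic $M$, where $p_n$ satisfies precisely the fixed-point equation in the statement. Taking $M = I$ handles the noise term since $\tr{(p_n\Sigma_{XX}+\lambda' I)^{-1}} = \tr{(p_n\Lambda_{XX}+\lambda' I)^{-1}}$. For $M = A^\top A$ in the signal term, I would use independence of the Haar factor $U$ from $A$: conditioning on $\Lambda_{XX}$ and applying $\mathbb{E}_U[U^\top M U] = \tfrac{\tr{M}}{n} I$ yields $\mathbb{E}[\tr{A^\top A(p_n\Sigma_{XX}+\lambda' I)^{-1}}] = \tfrac{\|A\|_F^2}{n}\tr{(p_n\Lambda_{XX}+\lambda' I)^{-1}}$, while Haar-concentration on $SO(n)$ (as used in the proof of Theorem~\ref{thm:ours}) absorbs the fluctuation into $\mathcal{E}$.

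The main obstacle is the deterministic-equivalent step: making the sample-covariance resolvent approximation quantitative enough so that, after multiplication by the $\lambda'\|A\|_F^2/n$ prefactor, the cumulative remainder still qualifies as an $\mathcal{E}$ vanishing as $n\to\infty$. This requires careful tracking of the implicit constants (which depend on the operator norm of $\Sigma_{XX}$ and of the ridge resolvent), and one must verify that the standard Marchenko-Pastur convergence applies in the regime of fixed aspect ratio $c$ with bounded population spectrum implicit in the statement. A secondary difficulty is the Haar-invariance reduction for $\tr{A^\top A(p_n\Sigma_{XX}+\lambda' I)^{-1}}$: the Lipschitz constant of this function of $U$ scales with $\|A^\top A\|_F$ times the spectral norm of the resolvent, so one must check that the concentration error remains $o(1)$ uniformly in the relevant asymptotic regime.
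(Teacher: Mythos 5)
Your proposal is correct and follows essentially the same route as the paper: the same decomposition of $\|\hat{A}_\lambda\|_F^2$ into signal, cross, and noise terms with the cross term vanishing in expectation, the same averaging over $\lambda\sim\mathrm{Unif}[0,\lambda']$ collapsing the signal bias to $-\lambda'\tr{A^\top A(C_{XX}+\lambda' I)^{-1}}$ and the noise term to a multiple of $\tr{(C_{XX}+\lambda' I)^{-1}}$, followed by the generalized Marchenko--Pastur deterministic equivalent (with the same fixed point $p_n$) and the Haar-invariance identity $\mathbb{E}_U[U(p_n\Lambda_{XX}+\lambda' I)^{-1}U^\top]=\tfrac{1}{n}\tr{(p_n\Lambda_{XX}+\lambda' I)^{-1}}I$. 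The only cosmetic differences are that you perform the $\lambda$-integral via a resolvent decomposition plus integration by parts where the paper integrates $C_{XX}(C_{XX}+\lambda I)^{-2}C_{XX}$ directly, and that the paper, bounding only an expectation, does not need the extra Haar-concentration step you mention; the quantitative gap you flag in the deterministic-equivalent step is likewise left informal in the paper (it is simply absorbed into $\mathcal{E}$).
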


\begin{proof}
Given our samples $(X_i, Y_i)$, 
form the matrices $C_{XX} = \frac{1}{cn} \sum_{i=1}^{cn} X_i X_i^\top$ and $Z = \frac{1}{cn} \sum_{i=1}^{cn} X_i Y_i^\top$. With these, we have $\hat{A}_{\lambda}^\top = (C_{XX}+ \lambda I)^{-1} Z$. By the definition of the linear model, we observe
\[
\begin{split}
    Z &= \frac{1}{cn} \left( \sum_{i=1}^{cn} X_i X_i^\top A^\top + X_i E_i^\top \right)\\&=   C_{XX} A^\top + \frac{1}{cn} \sum_i X_i E_i^\top. 
\end{split}
\]
Defining $\Gamma = \frac{1}{cn} \sum_i X_i E_i^\top$, we have
\[
\begin{split}
&\norm{\hat{A}_{\lambda}}_F^2  = \tr{A C_{XX} (C_{XX}+\lambda I)^{-2} C_{XX} A^\top}\\
& \quad + 2 \tr{\Gamma^\top (C_{XX}+ \lambda I)^{-2} C_{XX} A^\top}\\
& \quad + \tr{ \Gamma^\top  (C_{XX}+\lambda I)^{-2} \Gamma }.
\end{split}
\]
Since $E$ is zero-mean, we have $\mathbb{E}\left[ \Gamma \right] = 0$. Further, since $\mathbb{E} \left[ E_i E_i^\top \right] \preceq \sigma^2 I$, 
\[
\begin{split}
\mathbb{E}_{E_i} \left[ \norm{\hat{A}_{\lambda}}_F^2 \right] &= \tr{ A C_{XX} (C_{XX}+\lambda I)^{-2} C_{XX} A^\top } + \epsilon_{\lambda},
\end{split}
\]
where $\epsilon_{\lambda}$ is between $0$ and $\frac{\sigma^2}{cn} \tr{ (C_{XX}+\lambda I)^{-2} C_{XX}}$. 
Taking the expectations over the random choice of $\lambda$, 
\[
\begin{split}
&\mathbb{E}_{\lambda} \left[ C_{XX} (C_{XX}+\lambda I)^{-2} C_{XX} \right]\\&\quad= \frac{1}{\lambda'} \int_0^{\lambda'} C_{XX} (C_{XX}+\lambda I)^{-2} C_{XX} \mathrm{d} \lambda \\
&\quad = C_{XX} (C_{XX}+\lambda' I)^{-1} = I - \lambda' (C_{XX}+\lambda' I)^{-1}.
\end{split}
\]
and
\[
\begin{split}
&\mathbb{E}_{\lambda} \left[ \tr{ (C_{XX}+\lambda I)^{-2} C_{XX}} \right] \\&\quad = \frac{1}{\lambda'} \int_0^{\lambda'} \tr{ (C_{XX}+\lambda I)^{-2} C_{XX}} \mathrm{d} \lambda  = \tr{ (C_{XX}+\lambda' I)^{-1} }.
\end{split}
\]
Thus 
\[
\begin{split}
&\mathbb{E}_{E_i, \lambda} \left[ \norm{\hat{A}_{\lambda}}_F^2 \right] - \norm{A}_F^2 = - \lambda' \tr{A^\top (C_{XX}+\lambda' I)^{-1} A} + \epsilon
\end{split}
\]
for $\epsilon \in [0, \frac{\sigma^2}{cn} \tr{(C_{XX}+\lambda' I)^{-1}}]$. Now observe that $C_{XX}$ is the empirical covariance of $X$. By the generalized Marchenko-Pastur law~\citep{liu2019ridge}, we have $
(C_{XX} + \lambda' I)^{-1} \rightarrow (p_n \Sigma_{XX} + \lambda' I)^{-1}
$
in distribution as $n \rightarrow \infty$, where $p_n$ is the unique solution to 
\[
1 -p_n = \frac{p_n}{cn} \tr{ \Sigma_{XX} \left( p_n \Sigma_{XX}  + \lambda' I \right)^{-1} }.
\]
Now, recall that $\Sigma_{XX} = U \Lambda_{XX} U^\top$ for $U \sim O(n)$ by the generative assumption. Since $p_n$ is unchanged by orthogonal rotations,
\[
\mathbb{E} \left[ (p_n \Sigma_{XX} + \lambda' I)^{-1} \right] = \frac{1}{n} \tr{ (p_n \Lambda_{XX} + \lambda' I)^{-1} } I.
\]
Applying this, we arrive at the desired bounds
\[
\begin{split}
&\mathbb{E} \left[ \norm{\hat{A}_{\lambda}}_F^2 - \norm{A}_F^2 \right] \geq -\frac{\lambda' \norm{A}_F^2}{n} \tr{ (p_n \Lambda_{XX} + \lambda' I)^{-1} } - \mathcal{E}
\end{split}
\]
and
\[
\begin{split}
\mathbb{E} \left[ \norm{\hat{A}_{\lambda}}_F^2 - \norm{A}_F^2 \right] &\leq -\frac{\lambda' \norm{A}_F^2}{n} \tr{ (p_n \Lambda_{XX} + \lambda' I)^{-1} }\\
&+ \frac{\sigma^2}{cn} \tr{ (p_n \Lambda_{XX} + \lambda' I)^{-1} } + \mathcal{E},
\end{split}
\]
where the expectation is taken over the samples $(X_i, Y_i)$, the random choice of $\lambda$, and the randomness used to generate the covairance matrix $\Sigma$.
\end{proof}

\begin{remark}
We note that converting the above bound in the limit of large $n$ to a finite-$n$ bound is difficult due to the reliance on the Marchenko-Pastur distributional law~\citep{pastur1967distribution, anderson2010introduction, edelman2005random}. However, in practice the deviation between the true distribution of $\sum_{i} X_i X_i^\top$ and the Marchenko-Pastur law is negligible for moderately large $n$: we consequently employ this assumption in our analysis. 
\end{remark}

By adapting the proof technique, we can additionally quantify the bias of the numerator in our empirical Delta estimator. 
\begin{lemma}
\label{lemma:num}
Under the same assumptions as Lemma~\ref{lem:ridge_main}, we have
\[
\begin{split}
&- \lambda' \norm{A}_F^2+ \lambda'^2 \tr{A (p_n \Lambda_{XX} + \lambda' I)^{-1} A^\top} - \mathcal{E}\\
&\qquad \leq \mathbb{E} \left[ \tr{\hat{A}_{\lambda} C_{XX} \hat{A}_{\lambda}} - \tr{ A \Sigma_{XX} A^\top } \right] \leq \\&- \lambda' \norm{A}_F^2+ \lambda'^2 \tr{A (p_n \Lambda_{XX} + \lambda' I)^{-1} A^\top} \\
&\quad\qquad\quad\ \ +\frac{\sigma^2}{cn} ( n - \lambda' \tr{( p_n \Lambda_{XX} + \lambda' I)^{-1}} + \mathcal{E}
\end{split}
\]
for $E \rightarrow 0$ as $n \rightarrow \infty$, where $p_n$ is defined as in Lemma~\ref{lem:ridge_main}.
\end{lemma}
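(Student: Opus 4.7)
The plan is to mimic the proof of Lemma~\ref{lem:ridge_main} step by step, replacing $\|\hat{A}_\lambda\|_F^2 = \tr{\hat{A}_\lambda \hat{A}_\lambda^\top}$ with $\tr{\hat{A}_\lambda C_{XX} \hat{A}_\lambda^\top}$ and carrying the extra factor of $C_{XX}$ through the computation. As before, I would start by writing $\hat{A}_\lambda^\top = (C_{XX}+\lambda I)^{-1} Z$ with $Z = C_{XX} A^\top + \Gamma$ and $\Gamma = \tfrac{1}{cn}\sum_i X_i E_i^\top$. Expanding gives
\[
\tr{\hat{A}_\lambda C_{XX} \hat{A}_\lambda^\top} = \tr{A\,M_\lambda\,A^\top} + 2\,\tr{A\,N_\lambda\,\Gamma} + \tr{\Gamma^\top N_\lambda \Gamma},
\]
where $M_\lambda := C_{XX}(C_{XX}+\lambda I)^{-1}C_{XX}(C_{XX}+\lambda I)^{-1}C_{XX}$ and $N_\lambda := (C_{XX}+\lambda I)^{-1}C_{XX}(C_{XX}+\lambda I)^{-1}C_{XX}$. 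Since $\mathbb{E}[\Gamma\mid X_i]=0$, the cross term vanishes in expectation over $E_i$, and $\mathbb{E}[E_i E_i^\top] \preceq \sigma^2 I$ bounds the quadratic noise contribution by $\tfrac{\sigma^2}{cn}\tr{N_\lambda}$, so the remaining work is to analyze $\mathbb{E}[A M_\lambda A^\top]$ and $\mathbb{E}[\tr{N_\lambda}]$.

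The next step is to integrate over $\lambda\in[0,\lambda']$ in diagonalizing coordinates. A direct computation gives $\tfrac{1}{\lambda'}\int_0^{\lambda'}\tfrac{\mu^3}{(\mu+\lambda)^2}\,d\lambda = \tfrac{\mu^2}{\mu+\lambda'}$, so $\mathbb{E}_\lambda[M_\lambda] = C_{XX}^2(C_{XX}+\lambda' I)^{-1} = C_{XX} - \lambda' I + \lambda'^2(C_{XX}+\lambda' I)^{-1}$, using the telescoping identity $C_{XX}(C_{XX}+\lambda' I)^{-1} = I - \lambda'(C_{XX}+\lambda' I)^{-1}$ twice. The analogous calculation for the noise normalization yields $\mathbb{E}_\lambda[\tr{N_\lambda}] = \tr{C_{XX}(C_{XX}+\lambda' I)^{-1}} = n - \lambda'\tr{(C_{XX}+\lambda' I)^{-1}}$, which will produce the $\tfrac{\sigma^2}{cn}(n - \lambda'\tr{(p_n\Lambda_{XX}+\lambda' I)^{-1}})$ term in the stated upper bound once the Marchenko--Pastur step is applied.

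To pass from $C_{XX}$ to $\Sigma_{XX}$, I would invoke the generalized Marchenko--Pastur law exactly as in Lemma~\ref{lem:ridge_main}, replacing $(C_{XX}+\lambda' I)^{-1}$ by $(p_n\Sigma_{XX}+\lambda' I)^{-1}$ up to a vanishing error $\mathcal{E}$, with $p_n$ the same fixed point. This converts the main term into $\tr{A(\Sigma_{XX} - \lambda' I + \lambda'^2(p_n\Sigma_{XX}+\lambda' I)^{-1})A^\top}$. Subtracting $\tr{A\Sigma_{XX}A^\top}$ leaves $-\lambda'\|A\|_F^2 + \lambda'^2\tr{A(p_n\Sigma_{XX}+\lambda' I)^{-1}A^\top}$, and rotational invariance of the Haar measure on $U$ (which writes $\Sigma_{XX} = U\Lambda_{XX}U^\top$) allows us to replace $(p_n\Sigma_{XX}+\lambda' I)^{-1}$ by its eigenvalue-expression $(p_n\Lambda_{XX}+\lambda' I)^{-1}$ after averaging. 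Combining this with the noise-term bounds in $[0, \tfrac{\sigma^2}{cn}(n-\lambda'\tr{(p_n\Lambda_{XX}+\lambda' I)^{-1}})]$ produces the two-sided inequality of Lemma~\ref{lemma:num}.

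The main obstacle I expect is bookkeeping rather than new ideas: carefully justifying that the lower-order error term $\mathcal{E}$ arising from the Marchenko--Pastur approximation, originally controlled for the linear functional $\tr{(C_{XX}+\lambda' I)^{-1}}$, still vanishes when composed with the extra factors of $C_{XX}$ and with the rotation-averaging that turns $\Sigma_{XX}$ into $\Lambda_{XX}$. A secondary delicate point is checking that $\mathbb{E}[\tr{A(C_{XX}-\Sigma_{XX})A^\top}] = 0$ genuinely cancels in the subtraction despite being coupled with the $\lambda$-dependent terms, which I would verify by separating the expectations over $\lambda$, over the samples, and over $U$ in the correct order.
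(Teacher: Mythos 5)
Your proposal follows essentially the same route as the paper's own (sketched) proof: the same expansion of $\hat{A}_\lambda$ via $Z = C_{XX}A^\top + \Gamma$, vanishing cross term, the $\frac{\sigma^2}{cn}$ bound on the quadratic noise term, the identical $\lambda$-averaging identities $\frac{1}{\lambda'}\int_0^{\lambda'}\frac{\mu^3}{(\mu+\lambda)^2}d\lambda=\frac{\mu^2}{\mu+\lambda'}$ and $C_{XX}^2(C_{XX}+\lambda' I)^{-1}=C_{XX}-\lambda' I+\lambda'^2(C_{XX}+\lambda' I)^{-1}$, and the same Marchenko--Pastur plus Haar-rotation step. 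Your intermediate noise expression $\tr{C_{XX}^2(C_{XX}+\lambda I)^{-2}}$ is in fact the consistent version of what the paper displays, so the argument is correct and matches the paper.
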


\begin{figure*}
    \begin{minipage}{0.33\textwidth}
    \centering
    \includegraphics[width=\textwidth]{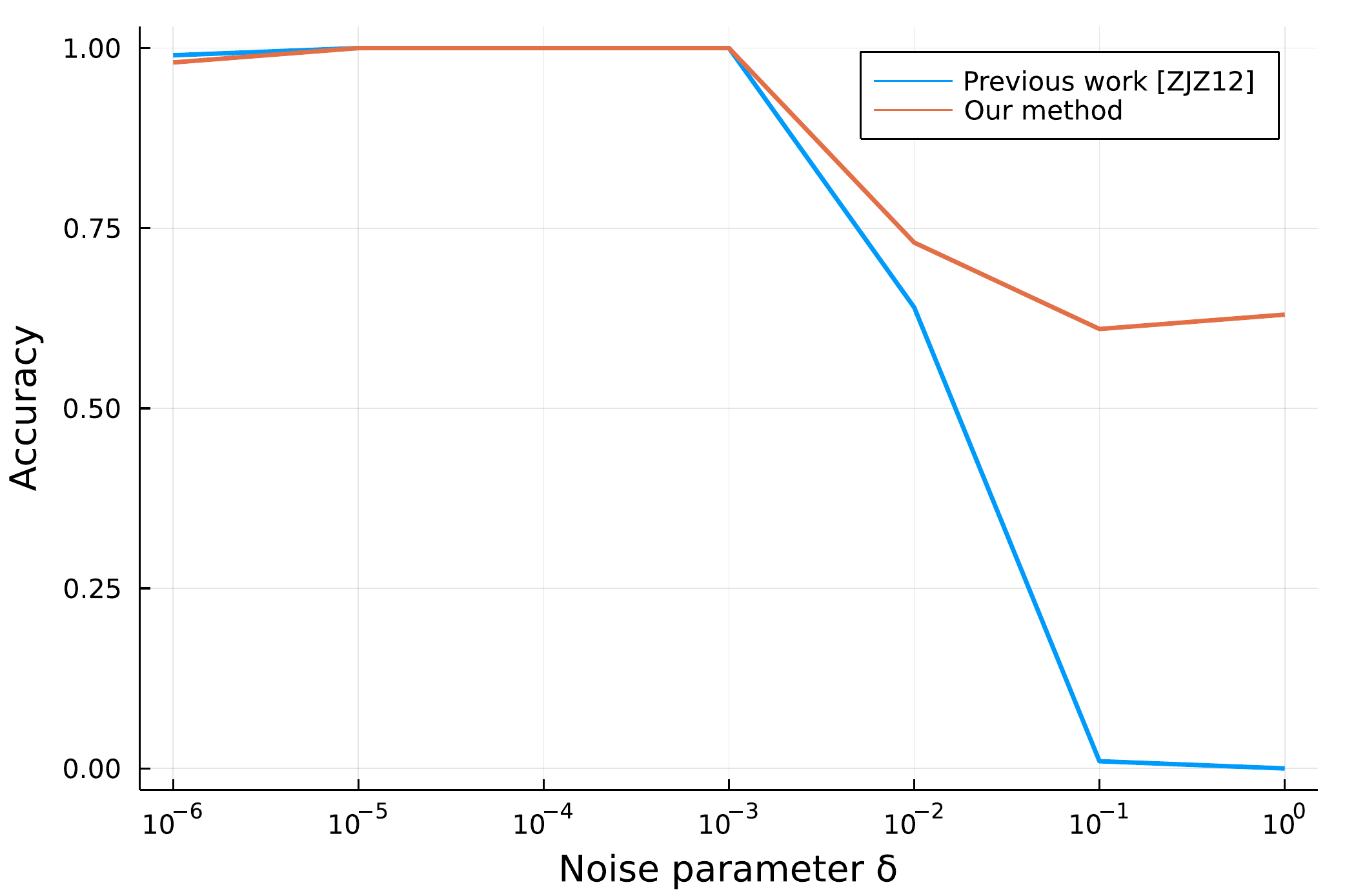}
    \end{minipage}
    \begin{minipage}{0.33\textwidth}
    \centering
    \includegraphics[width=\textwidth]{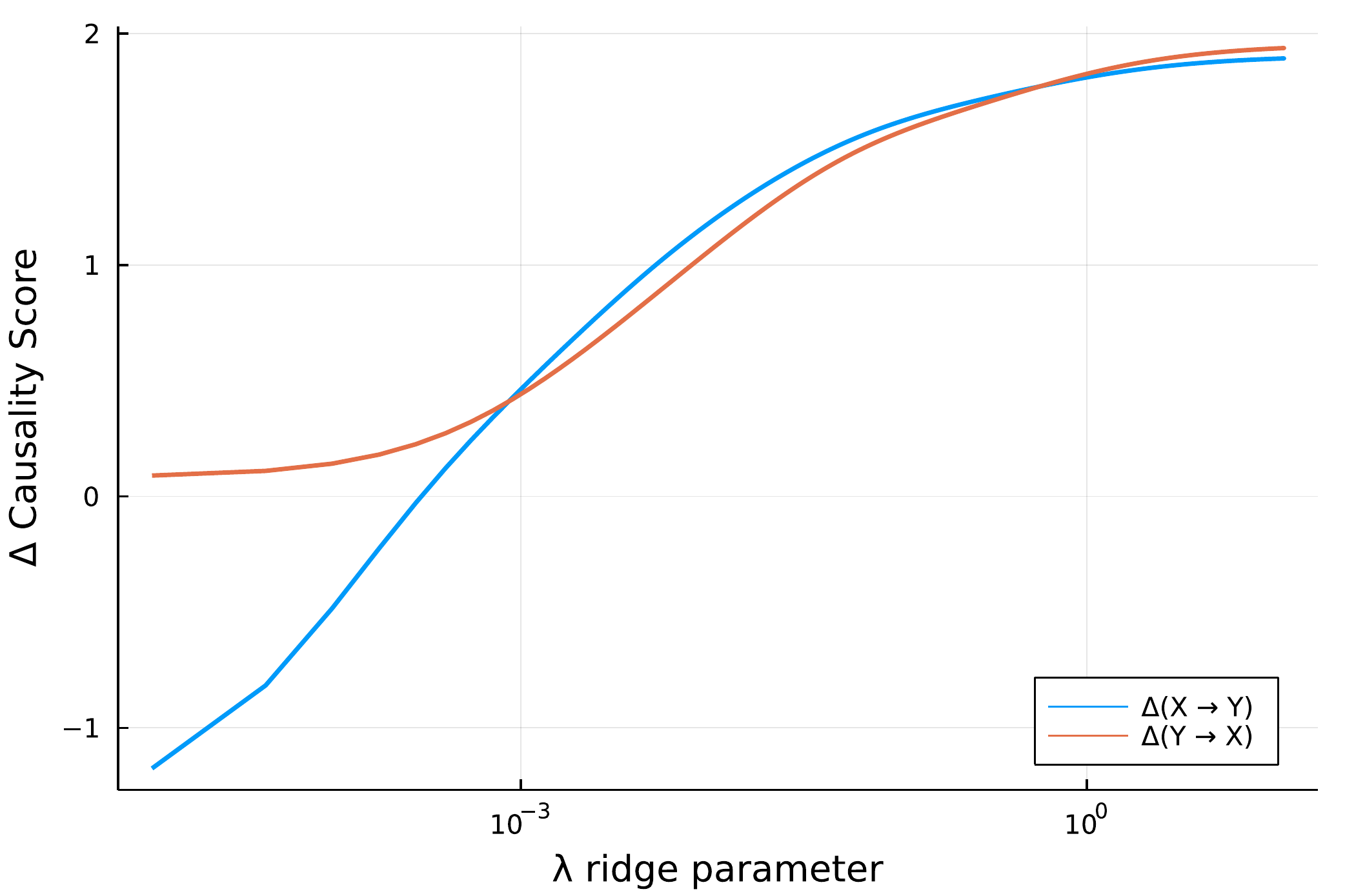}
    \end{minipage}
    \begin{minipage}{0.33\textwidth}
    \centering
    \includegraphics[width=\textwidth]{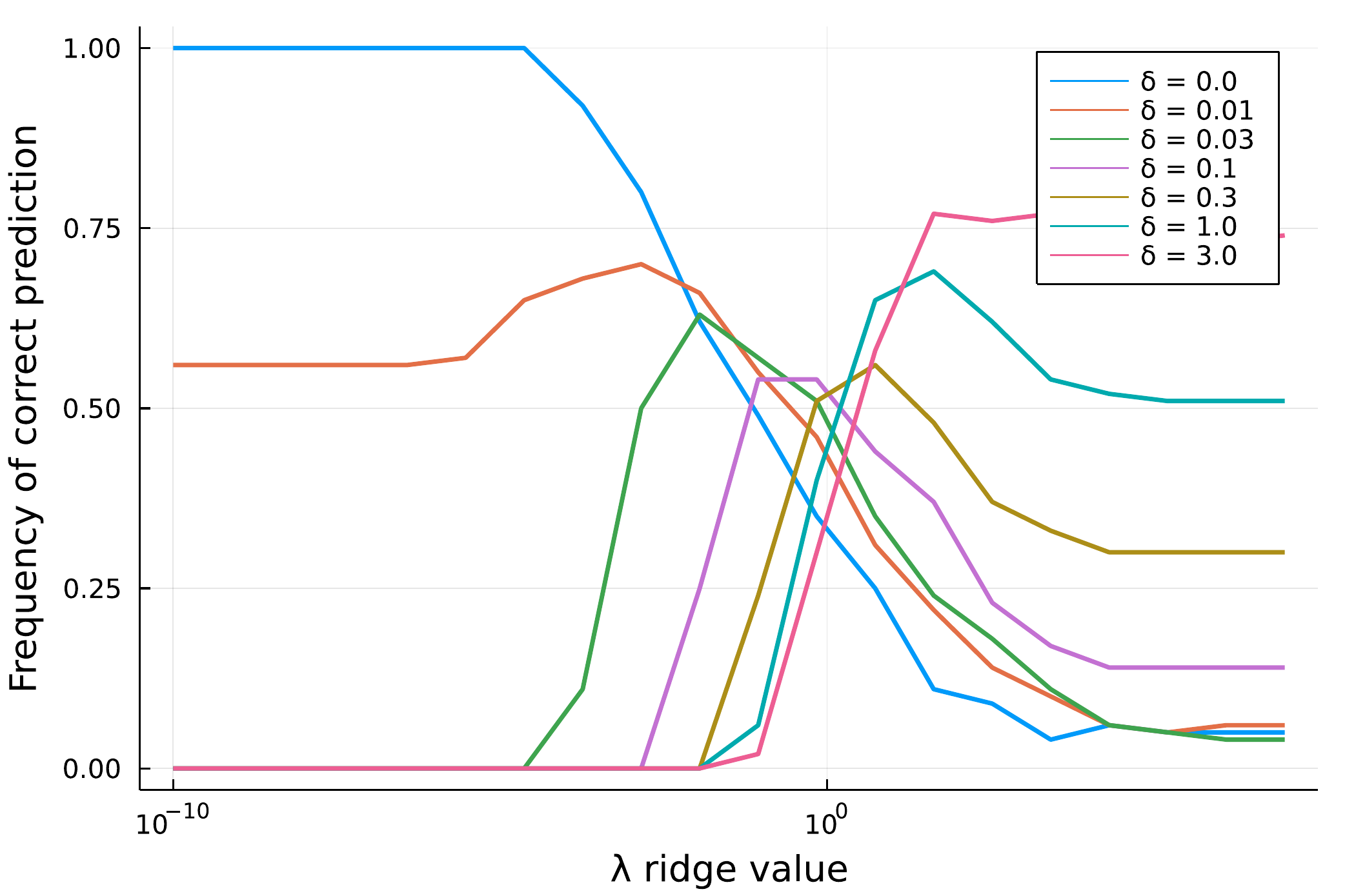}
    \end{minipage}
    \caption{Left: The comparison of the Delta estimator (blue curve) and the proposed ridge estimator (red curve). 
    The $x$-axis denotes the additive noise, and the $y$-axis is the accuracy, i.e., the proportion of the correct conclusion ($X$ causes $Y$) is drawn over 100 trials. Middle: Comparison of the Ridge-regularized Delta performance for when $\delta = 0.03$. Note that for $\lambda \in [10^{-3}, 10^{-1}]$ our algorithm makes the correct causal decision. Right: The accuracy of the proposed estimator under different noise level $\delta$. The $x$-axis denotes the regularization parameter $\lambda$, and the $y$-axis is the accuracy.}
    \label{fig:noise}
\end{figure*}

\begin{proof}[(Sketch)]
By repeating the proof from above, we have 
\[
\begin{split}
&\mathbb{E}_{E_i} \left[ \tr{ \hat{A}_{\lambda} C_{XX} \hat{A}_{\lambda}} \right] \\
&= \tr{ A C_{XX}^{1.5} (C_{XX} + \lambda I)^{-2} C_{XX}^{1.5} A} \\
&+ \frac{\sigma^2}{cn} \tr{C_{XX}^2 (C_{XX} + \lambda I)^{-1}}.
\end{split}
\]
Analogously to before, we have 
\[
\begin{split}
&\mathbb{E}_{\lambda} \left[ C_{XX}^{1.5} (C_{XX} + \lambda I)^{-2} C_{XX}^{1.5} \right] \\
&\qquad= \frac{1}{\lambda'} \int_0^{\lambda'} C_{XX}^{1.5} (C_{XX} + \lambda I)^{-2} C_{XX}^{1.5} \mathrm{d}\lambda \\
&\qquad = C_{XX}(C_{XX} + \lambda' I)^{-1} C_{XX} \\
&\qquad = C_{XX} - \lambda' I +  \lambda'^2 (C_{XX} +  \lambda' I)^{-1}.
\end{split}
\]
Thus 
\[
\begin{split}
&\mathbb{E}_{E_i,  \lambda} \left[ \tr{\hat{A}_{\lambda} C_{XX} \hat{A}_{\lambda}^\top  } \right] = \tr{ A C_{XX} A^\top} - \lambda' \tr{ A A^\top } \\
&\qquad\qquad + \lambda'^2 \tr{ A (C_{XX} + \lambda' I)^{-1} A^\top} + \epsilon,
\end{split}
\]
where $\epsilon \in [0, \frac{\sigma^2}{c} - \frac{\lambda' \sigma^2}{cn} \tr{ (C_{XX} + \lambda' I)^{-1} }]$
Employing the Marchenko-Pastur law (\cite{liu2019ridge}) and noting $\mathbb{E}_{X_i} [C_{XX}] = \Sigma_{XX}$,  we obtain the claimed bounds.
\end{proof}


We now describe a strategy to choose the value of $\lambda$ in our ridge estimator. Assume we had an upper bound $\hat{\sigma}^2$ on the variance of the error variable $E$. The value $p_n$ satisfies
\[
\begin{split}
1 - p_n &= \frac{1}{cn} \tr{p_n \Sigma_{XX} \left( p_n \Sigma_{XX} + \lambda' I \right)^{-1} } \\
&= \frac{1}{cn} \left(n - \lambda' \tr{\left( p_n \Sigma_{XX} + \lambda' I \right)^{-1} } \right)
\end{split}
\]
Rearranging yields
\[
 \tr{\left( p_n \Sigma + \lambda' I \right)^{-1} } = \frac{c n p_n - cn + n}{\lambda'}.
\]
Substituting in and after simplification, we have
\[
\begin{split}
\mathbb{E} \left[ \norm{\hat{A}_{\lambda}}_F^2 - \norm{A}_F^2 \right]  &= \left( - \frac{\lambda' \norm{A}_F^2}{n} + \frac{\sigma^2}{cn} \right) \frac{c n p_n - cn + n}{\lambda'}\\
&= - \norm{A}_F^2 (c p_n - c + 1) + \frac{\sigma^2}{c \lambda'} (c p_n - c + 1).
\end{split}
\]
Choosing 
$
    \lambda' = \frac{\hat{\sigma}^2}{2 c \norm{A}_F^2}, 
$
we have, given that $\sigma^2 \in [0, \hat{\sigma}^2]$ 
\[
\left| \mathbb{E} \left[ \norm{\hat{A}_{\lambda}}_F^2 - \norm{A}_F^2 \right] \right| \leq \norm{A}_F^2 ( c p_n - c +1 ). 
\]
We remark that $p_n$ approaches $1$ as $c$ grows larger: the right hand side is always positive. We would like to call out the practical limitation of setting the regularization parameter to $\lambda'$. The optimal value of lambda implied by our proof is the solution to an implicit equation depending on the eigenvalues of the underlying covariance. Therefore, a finite sample estimation is required to make it practical, and we plan to investigate this in future work.

\section{Experiments}
\label{sec:experiments}
To demonstrate the utility of our ridge estimator, we conduct experiments comparing it against \cite{zscheischler2011testing} on synthetic data. The primary questions that we seek answer to are: 1) In the presence of additive noise, can we recover the correct causal relationship? 2) How does the regularization parameter $\lambda$ affect the effectiveness of the proposed estimator under different level of noise?

To this end, we generate the synthetic dataset as follows. First, we sample $X \sim N(0, U \Sigma_X U^\top)$ with $U$ Haar-distributed and the singular values $\Sigma_X$ following a power-law distribution.
The structural matrix $A$ has i.i.d. Gaussian entries. We then took $Y = AX + \delta E$, for $E \sim N(0,I)$ and noise parameter $\delta$: clearly, the data generated in this way has a causal relationship from $X$ to $Y$. In the experiments, we generated $T=100$ samples from $n=40$-dimensional Gaussians and compare the estimator \eqref{eqn:naive_delta_est} of \cite{zscheischler2011testing} with the proposed estimator \eqref{eqn:ridge}. Following \cite{zscheischler2011testing}, we declared the causal direction to be the one with a larger $\Delta$ score with a tolerance margin of $\xi=0.05$, i.e., $\Delta_{X\rightarrow Y} \geq \Delta_{Y\rightarrow X} + 0.05.$ As shown in Figure~\ref{fig:noise} Left, Delta estimator consistently fails to correctly identify the causal direction in the presence of even modest amounts of noise. On the other hand, our estimator with an appropriate choice of regularization parameter $\lambda$ can recover the correct causal relationship between $X$ and $Y$, even in the presence of moderate amount of noise.


Next we quantify how the regularization parameter affect the conclusion of the causal direction. To this aim, we computed our estimator with $\delta = 0.03$ and various values of ridge parameter $\lambda$, and check the value of the ridge estimators for $X\rightarrow Y$ and $Y\rightarrow X.$ For small values of $\lambda$, the noise latent in our causal model hides the fact that $X$ causes $Y$. For large values of ridge parameter, the ridge-estimated coefficient matrix fails to be a good approximation of the truth: we may reach the wrong conclusion. Within a `Goldilocks' region however, the value of $\lambda$ is sufficient to compensate for the introduced noise but does not obscure the coefficient matrix. We remark that for the noise level chosen ($\delta = 0.03$) the estimator of \cite{janzing2009telling} strongly and incorrectly indicates that $Y$ is the cause of $X$.
As we see in Figure~\ref{fig:noise} Center, our algorithm satisfies $\Delta_{X \rightarrow Y} \geq \Delta_{Y \rightarrow X} + 0.05$ for a significant regime of $\lambda$: it thus correctly identifies the causal relationship. Encouraged by this result, we generated $100$ different $C_{XX}$ Linear causal models  $Y = AX + E$ for various values of $\delta$ and, for a range of $\lambda$ regularization values. We give the results below in Figure~\ref{fig:noise} Right.
The results achieved in these experiments confirmed our conclusions from the previous experiment. We observed a sharp dip after a mild amount of added noise before the performance of our estimator improves with higher noise. We suspect that this is caused by moderate amounts of noise `hiding' critical small eigenvectors of the covariance matrix: once the magnitude of the noise grows beyond this critical region the standard Gaussian noise fails to hide this information and our method's accuracy recovers.

\section{Conclusion}
In this paper, we revisit the linear trace method for testing the causal relationship between two high-dimensional random variables. We significantly improve the state-of-the-art bounds for the linear causality estimator proposed by~\cite{janzing2009telling} (at least by a factor of $O(1/\sqrt{m})$), and extend the results to nonlinear trace functionals with sharper confidence bounds under certain distributional assumptions. As another main contribution, we propose a novel ridge-regression estimator that enjoys provable guarantees for individual terms under noisy regime, a first result of this kind, with numerical simulation showing its promise in high-dimension low sample size setting. However, in this work, we are unable to achieve the finite sample bound due to the lack of a refined tailed analysis of the empirical covariance matrix, which we target for future work.

\newpage
\bibliographystyle{icml2023}
\bibliography{causal}

\begin{thebibliography}{11}
\providecommand{\natexlab}[1]{#1}
\providecommand{\url}[1]{\texttt{#1}}
\expandafter\ifx\csname urlstyle\endcsname\relax
  \providecommand{\doi}[1]{doi: #1}\else
  \providecommand{\doi}{doi: \begingroup \urlstyle{rm}\Url}\fi

\bibitem[Anderson et~al.(2010)Anderson, Guionnet, and
  Zeitouni]{anderson2010introduction}
Anderson, G.~W., Guionnet, A., and Zeitouni, O.
\newblock \emph{An introduction to random matrices}.
\newblock Number 118. Cambridge university press, 2010.

\bibitem[Edelman \& Rao(2005)Edelman and Rao]{edelman2005random}
Edelman, A. and Rao, N.~R.
\newblock Random matrix theory.
\newblock \emph{Acta numerica}, 14:\penalty0 233--297, 2005.

\bibitem[Jambulapati et~al.(2020)Jambulapati, Li, and Tian]{Jambulapati0T20}
Jambulapati, A., Li, J., and Tian, K.
\newblock Robust sub-gaussian principal component analysis and
  width-independent schatten packing.
\newblock In Larochelle, H., Ranzato, M., Hadsell, R., Balcan, M., and Lin, H.
  (eds.), \emph{Advances in Neural Information Processing Systems 33: Annual
  Conference on Neural Information Processing Systems 2020, NeurIPS 2020,
  December 6-12, 2020, virtual}, 2020.

\bibitem[Janzing et~al.(2009)Janzing, Hoyer, and
  Sch{\"o}lkopf]{janzing2009telling}
Janzing, D., Hoyer, P.~O., and Sch{\"o}lkopf, B.
\newblock Telling cause from effect based on high-dimensional observations.
\newblock \emph{arXiv preprint arXiv:0909.4386}, 2009.

\bibitem[Liu \& Dobriban(2019)Liu and Dobriban]{liu2019ridge}
Liu, S. and Dobriban, E.
\newblock Ridge regression: Structure, cross-validation, and sketching.
\newblock \emph{arXiv preprint arXiv:1910.02373}, 2019.

\bibitem[Pastur \& Martchenko(1967)Pastur and
  Martchenko]{pastur1967distribution}
Pastur, L. and Martchenko, V.
\newblock The distribution of eigenvalues in certain sets of random matrices.
\newblock \emph{Math. USSR-Sbornik}, 1\penalty0 (4):\penalty0 457--483, 1967.

\bibitem[Pearl(2009)]{pearl2009causality}
Pearl, J.
\newblock \emph{Causality}.
\newblock Cambridge university press, 2009.

\bibitem[Peters et~al.(2017)Peters, Janzing, and
  Sch{\"o}lkopf]{peters2017elements}
Peters, J., Janzing, D., and Sch{\"o}lkopf, B.
\newblock \emph{Elements of causal inference: foundations and learning
  algorithms}.
\newblock The MIT Press, 2017.

\bibitem[Sch{\"o}lkopf(2019)]{scholkopf2019causality}
Sch{\"o}lkopf, B.
\newblock Causality for machine learning.
\newblock \emph{arXiv preprint arXiv:1911.10500}, 2019.

\bibitem[Vershynin(2018)]{HDP}
Vershynin, R.
\newblock High-dimensional probability: An introduction with applications in
  data science.
\newblock 47, 2018.

\bibitem[Zscheischler et~al.(2011)Zscheischler, Janzing, and
  Zhang]{zscheischler2011testing}
Zscheischler, J., Janzing, D., and Zhang, K.
\newblock Testing whether linear equations are causal: a free probability
  theory approach.
\newblock In \emph{Proceedings of the Twenty-Seventh Conference on Uncertainty
  in Artificial Intelligence}, pp.\  839--848, 2011.

\end{thebibliography}



\end{document}